\documentclass{article}[12pt]
\usepackage{rotating,amsmath, amssymb, amsthm,algorithm}
\usepackage{amsfonts, verbatim,url}
\usepackage{epsfig,url, psfrag,appendix}
\usepackage{graphicx, multirow,graphics}
\usepackage{setspace}
\usepackage[left=3.1cm,top=2.5cm,right=3.1cm,bottom=2.5cm]{geometry}
\usepackage{natbib}
\usepackage{color}

\newtheorem{thm}{Theorem} 
\newtheorem{lem}{Lemma}

\newtheorem{rem}{Remark}

\newcommand{\sgn}{\operatorname{sgn}}
\newcommand{\s}{\mathbf S}

\newcommand{\half}{\mbox{$\frac12$}}

\newcommand{\tr}{\,\mathrm{tr}}
\newcommand{\sbt}{\mathrm{subject\;\;to}}

\newcommand{\B}{\boldsymbol}
\newcommand{\M}{\mathbf}
\newcommand{\commentout}[1]{}

\newcommand{\abs}{\mbox{abs}}

\DeclareMathOperator*{\mini}{minimize}
\DeclareMathOperator*{\maxi}{maximize}
\newcommand{\GL}{{\sc glasso}}
\newcommand{\PGL}{{\sc p-glasso}}
\newcommand{\DPGL}{{\sc dp-glasso}}
\newcommand{\DDGL}{{\sc dd-glasso}}

\begin{document}


\title{The Graphical Lasso: New Insights and Alternatives}
\author{Rahul Mazumder\thanks{email: rahulm@stanford.edu}  \hspace{2cm} Trevor Hastie\thanks{email: hastie@stanford.edu}\\
Department of Statistics\\
Stanford University\\
Stanford, CA  94305.}
\date{Revised Draft on August 1, 2012}

\maketitle

\begin{abstract}
  The graphical lasso \citep{FHT2007a} is an algorithm 
for learning the structure in an undirected Gaussian graphical
  model, using $\ell_1$ regularization to control the number of zeros
  in the precision matrix ${\B\Theta}={\B\Sigma}^{-1}$
  \citep{BGA2008,yuan_lin_07}. The {\texttt R}
  package \GL\ \citep{FHT2007a} is popular, fast, and allows one to efficiently build a
  path of models for different values of the tuning parameter.
  Convergence of \GL\ can be tricky; the converged precision matrix
  might not be the inverse of the estimated covariance, and
  occasionally it fails to converge with warm starts. In this paper we
  explain this behavior, and propose new algorithms that appear to
  outperform \GL.

  By studying the ``normal equations'' we see that, \GL\ is solving the {\em dual} of the graphical
  lasso penalized likelihood, by block coordinate ascent; a result which can also be found in \cite{BGA2008}.
  In this dual, the target of estimation is $\B\Sigma$, the covariance matrix,
  rather than the precision matrix $\B\Theta$.  We propose  similar
  primal algorithms \PGL\ and \DPGL, that also operate by block-coordinate descent,
  where $\B\Theta$ is the optimization target.  We study all of these
  algorithms, and in particular different approaches to solving their
  coordinate sub-problems.  We conclude that \DPGL\ is
  superior from several points of view.
\end{abstract}

\section{Introduction}
Consider a data matrix $\M{X}_{n \times p}$, a sample of $n$
realizations from a $p$-dimensional Gaussian distribution with zero
mean and positive definite covariance matrix $\B\Sigma$.  The task is
to estimate the unknown $\B\Sigma$ based on the $n$ samples --- a
challenging problem especially when $n \ll p$, when the
ordinary maximum likelihood estimate does not exist.  Even if it
does exist (for $p \leq n$), the MLE is often poorly behaved, and
regularization is called for.  The Graphical Lasso
\citep{FHT2007a} is a regularization
framework for estimating the covariance matrix $\B\Sigma$, under the
assumption that its inverse $\B\Theta=\B\Sigma^{-1}$ is
sparse~\citep{BGA2008,yuan_lin_07,MB2006}. $\B\Theta$ is called the precision matrix; if an element
$\theta_{jk}=0$, this implies that the corresponding variables $X_j$
and $X_k$ are conditionally independent, given the rest.  Our
algorithms focus either on the restricted version of $\B\Theta$ or its
inverse ${\M W}={\B\Theta}^{-1}$.  The graphical lasso problem minimizes a
$\ell_1$-regularized negative log-likelihood:
\begin{equation} \label{eqn-1}
\mini_{\B\Theta \succ \mathbf{0} } f(\B\Theta) := - \log\det (\B{\Theta} ) + \tr(\s\B{\Theta} ) + \lambda \|\B{\Theta} \|_1.
\end{equation} 
Here $\M{S}$ is the sample covariance matrix, $\|\B{\Theta} \|_1$ denotes the sum of the absolute values of  
$\B{\Theta}$, and $\lambda$ is a tuning parameter controlling the amount of $\ell_1$ shrinkage.
This is a semidefinite programming problem (SDP) in the variable $\B{\Theta}$ \citep{BV2004}.

In this paper we revisit the \GL\ algorithm proposed by
\citet{FHT2007a} for solving~(\ref{eqn-1}); we analyze its properties, expose problems and issues,
and propose alternative algorithms more suitable for the task.

Some of the results and conclusions of this paper can be found in \cite{BGA2008}, both explicitly and implicitly. We
re-derive some of the results and derive new results, insights and algorithms, using a unified and more elementary framework.    
\paragraph{Notation}
We denote the entries of a matrix $\B{A}_{n \times n}$ by $a_{ij}$. 
$\|\B{A}\|_1$ denotes the sum of its absolute values, $\|\B{A}\|_\infty$ the maximum absolute value of its entries, 
$\|\B{A}\|_F$ is its Frobenius norm, and $\abs(\B{A})$ is the matrix with elements $|a_{ij}|$. For a vector $\M{u} \in \Re^{q}$,  $\|\M{u}\|_1$ denotes the $\ell_1$ norm, and so on.

From now on, unless otherwise specified, we will assume that $\lambda >0$.
\section{Review of the \GL\  algorithm.}
\label{sec:review}
We use the frame-work of ``normal equations'' as in
\cite{FHT-09,FHT2007a}. 
Using sub-gradient notation, we can write the optimality conditions (aka ``normal equations") for a solution to (\ref{eqn-1}) as
\begin{equation}\label{stat-1}
- \B{\Theta}^{-1} + \M{S} + \lambda \B\Gamma = \M{0},  
\end{equation}
where $\B\Gamma$ is a matrix of component-wise signs of $\B\Theta$: 
\begin{equation}
  \label{eq:Gamma}
    \begin{array}{rll}
      \gamma_{jk}&=&\mbox{sign}(\theta_{jk})\mbox{ if $\theta_{jk}\neq 0$}\\
      \gamma_{jk}&\in&[-1,1]\mbox{ if $\theta_{jk}=0$}
    \end{array}
\end{equation}
(we use the notation $\gamma_{jk}\in\mbox{Sign}(\theta_{jk})$).
Since the global stationary conditions of (\ref{stat-1}) require $\theta_{jj}$ to be positive, this implies that
\begin{equation}\label{glob-stat-diag1}
w_{ii} = s_{ii} + \lambda,\; i = 1, \ldots, p,
\end{equation}
where $\M{W}=\B{\Theta}^{-1}$.

\GL\ uses a block-coordinate method for solving (\ref{stat-1}).
Consider a partitioning of  $\B{\Theta}$ and $\B{\Gamma}$: 
\begin{eqnarray}\label{break-x}
\B{\Theta} = \left(
  \begin{array}{cc}
    \B{\Theta}_{11} & \B{\theta}_{12} \\
    \B{\theta}_{21}  & \theta_{22} \\
  \end{array}
\right),                 & \B{\Gamma} = \left( 
  \begin{array}{cc}
    \B{\Gamma}_{11} & \B{\gamma}_{12} \\
    \B{\gamma}_{21}  & \gamma_{22} \\
  \end{array}
\right)
\end{eqnarray}
where $\B{\Theta}_{11}$ is $(p-1) \times (p-1)$, $\B{\theta}_{12}$ is $(p-1) \times 1$ and $\theta_{22}$ is scalar. 
$\M W$ and  $\M S$ are partitioned the same way.
Using properties of inverses of block-partitioned matrices, observe that $\M{W}=\B{\Theta}^{-1}$ can be written in two equivalent forms:
\begin{eqnarray} 
\left(
\begin{array}{cc}
    \M{W}_{11} & \M{w}_{12} \\
    \M{w}_{21}  & w_{22} \\
  \end{array}
\right) & = & \left(
  \begin{array}{cc}
(\B{\Theta}_{11} - \frac{\B{\theta}_{12}\B{\theta}_{21}}{\theta_{22}})^{-1} & -\M W_{11}\frac{\B{\theta}_{12}}{\theta_{22}} \\ [10pt]
    \cdot  & \frac{1}{\theta_{22}} - \frac{\B{\theta}_{21} \M W_{11} \B{\theta}_{12}}{\theta^2_{22}} \\
  \end{array}
\right)  \label{partition-1} \\[10pt]
 &= & \left(
  \begin{array}{cc}
\B{\Theta}^{-1}_{11} + \frac{\B\Theta_{11}^{-1}\B{\theta}_{12}\B{\theta}_{21}\B\Theta_{11}^{-1}}{(\theta_{22} - \B{\theta}_{21} \B\Theta_{11}^{-1} \B{\theta}_{12})} & 
-\frac{\B\Theta_{11}^{-1}\B{\theta}_{12}}{\theta_{22} - \B{\theta}_{21}\B\Theta_{11}^{-1}\B{\theta}_{12}}  \\ [10pt]
    \cdot  & \frac{1}{(\theta_{22} - \B{\theta}_{21} \B\Theta_{11}^{-1} \B{\theta}_{12})} \\
  \end{array}
\right). \label{partition-2}
\end{eqnarray}
\GL\ solves for a row/column of (\ref{stat-1}) at a time, holding the rest fixed.
Considering the $p$th column of (\ref{stat-1}), we get
\begin{equation} \label{normal-part1}
- \M{w}_{12}  + \M{s}_{12}  + \lambda \B{\gamma}_{12} =\M 0.
\end{equation}
Reading off $\M{w}_{12}$ from (\ref{partition-1}) we have   
\begin{equation}\label{eq-id-1}
\M{w}_{12} = - \M{W}_{11}\B{\theta}_{12}/\theta_{22} 
\end{equation}
and plugging into (\ref{normal-part1}), we have:
\begin{equation}\label{grad-2}
\M{W}_{11}\frac{\B{\theta}_{12}}{\theta_{22}}+\M{s}_{12} + \lambda \B{\gamma}_{12} = \M{0}.
\end{equation}
\GL\ operates on the above gradient equation, as described below.

As a variation consider reading off $\M{w}_{12}$  from 
(\ref{partition-2}):
\begin{equation}\label{grad-3}
\frac{\B{\Theta}^{-1}_{11}\B{\theta}_{12}}{(\theta_{22} - \B{\theta}_{21} \B\Theta_{11}^{-1} \B{\theta}_{12})}
  + \M{s}_{12} + \lambda \B\gamma_{12} = \M{0}.
\end{equation}
The above simplifies to
\begin{equation}\label{grad-4}
\B{\Theta}^{-1}_{11}\B{\theta}_{12}w_{22} + \M{s}_{12} + \lambda \B{\gamma}_{12} = \M{0},
\end{equation} 
where $w_{22} =1/(\theta_{22} - \B{\theta}_{21} \B{\Theta}^{-1}_{11}\B{\theta}_{12})$ is fixed (by the global stationary conditions (\ref{glob-stat-diag1})).
We will see that these two apparently similar estimating equations  (\ref{grad-2}) and (\ref{grad-4}) lead to \emph{very} different algorithms.

The \GL\ algorithm solves (\ref{grad-2}) for $\B{\beta}=\B{\theta}_{12}/\theta_{22}$, that is
\begin{equation}\label{grad-2a}
\M{W}_{11}\B{\beta}+\M{s}_{12} + \lambda \B{\gamma}_{12} = \M{0},
\end{equation}
where $\B{\gamma}_{12}\in\mbox{Sign}(\B\beta)$, since $\theta_{22}>0$. 
(\ref{grad-2a}) is the stationarity equation 
for the following $\ell_1$ regularized quadratic program: 
\begin{equation} \label{lasso-grad-3}
\mini_{\B\beta\in \Re^{p-1}}\;\;\left\{ \half \B{\beta}'\M{W}_{11}\B{\beta} +  \B{\beta}' \M{s}_{12} + \lambda\|\B{\beta}\|_1 \right\},
\end{equation}
where $\M{W}_{11}\succ 0$ is assumed to be fixed. This is analogous to a
lasso regression problem of the last variable on the rest, except the
cross-product matrix $\M{S}_{11}$ is replaced by its current estimate
$\M{W}_{11}$. This problem itself can be solved efficiently using
elementwise coordinate descent, exploiting the sparsity in
$\B\beta$. From $\hat{\B{\beta}}$, it is easy to obtain
$\hat{\M{w}}_{12}$ from (\ref{eq-id-1}).  Using the lower-right
element of (\ref{partition-1}), $\hat{\theta}_{22}$ is obtained by
\begin{equation}
\frac1{\hat{\theta}_{22}}  = w_{22}- \hat{\B\beta}'\hat{\M w}_{12}.
\label{diag-upd}
\end{equation}
Finally, $\hat{\B{\theta}}_{12}$ can now be recovered from $\hat{\B{\beta}}$ and $\hat{\theta}_{22}$.
Notice, however, that having solved for $\B\beta$ and updated $\M{w}_{12}$, \GL\ can move onto the next block; disentangling $\B{\theta}_{12}$ and $\theta_{22}$ can be done at the end, when the algorithm over all blocks has converged. 
The \GL\ algorithm is outlined in Algorithm~\ref{algo-glasso}. We show in Lemma~\ref{lem:warm-start} in Section~\ref{sec:warm-starts} that the successive updates in \GL\ keep $\M W$ positive definite.
\begin{algorithm}\caption{\GL\ algorithm \citep{FHT2007a}} \label{algo-glasso}
\begin{enumerate}
\item Initialize $\M W = \M S + \lambda \M I$.
\item Cycle around the columns repeatedly, performing the following steps till convergence:
\begin{enumerate}
\item Rearrange the rows/columns so that the target column is last (implicitly).
\item Solve the lasso problem (\ref{lasso-grad-3}), using as warm starts the solution from the previous round for this column. 
\item Update the row/column (off-diagonal) of the covariance  using $\hat{\M w}_{12}$ (\ref{eq-id-1}).
\item Save $\hat{\B\beta}$ for this column in the matrix $\M B$.
\end{enumerate}
\item Finally, for every row/column, compute the diagonal entries $\hat{\theta}_{jj}$ using~(\ref{diag-upd}), and
convert the $\M B$ matrix to  $\B\Theta$.
\end{enumerate}
\end{algorithm}
\begin{figure}[htpb]
  \centering
\includegraphics[width=\textwidth]{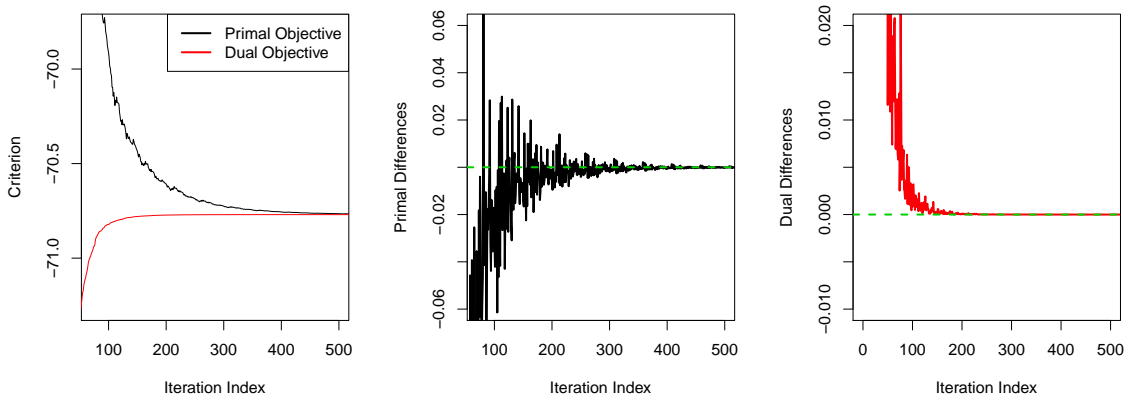}
  \caption{\small\em [Left panel] The objective values of the primal criterion~(\ref{eqn-1}) and the dual criterion~(\ref{box-sdp-1}) corresponding to the covariance matrix $\M{W}$ produced by \GL\ algorithm as a function of the iteration index (each column/row update). [Middle Panel] The successive differences of the primal objective values --- 
the zero crossings indicate non-monotonicity. [Right Panel] The successive differences in the dual objective values --- there are no zero crossings, indicating that \GL\ produces a monotone sequence of dual objective values.}
\label{fig-non-mono1}
\end{figure}

Figure~\ref{fig-non-mono1} (left panel, black curve) plots the objective $f(\B\Theta^{(k)})$ for the sequence of solutions produced by \GL\ on an example. Surprisingly, the curve is not monotone decreasing, as confirmed by the middle plot. If \GL\ were solving (\ref{eqn-1}) by block coordinate-descent, we would not anticipate this behavior. 

A closer look at steps (\ref{eq-id-1}) and (\ref{grad-2}) of the \GL\
algorithm leads to the following observations:
\begin{enumerate}
\item[(a)]
We wish to solve (\ref{normal-part1}) for $\B{\theta}_{12}$. However $\B{\theta}_{12}$ 
 is entangled in $\M{W}_{11}$, which is  (incorrectly) treated as a constant.
\item[(b)] After updating  $\B{\theta}_{12}$, we see from (\ref{partition-2}) that
the entire (working) covariance matrix $\M{W}$ changes. \GL\ however updates
only $\M{w}_{12}$ and $\M{w}_{21}$.
\end{enumerate}
These two observations explain the non-monotone behavior of \GL\ in minimizing $f(\B\Theta)$. 
Section~\ref{primal-cd} shows a corrected block-coordinate descent algorithm  for $\B\Theta$, and Section~\ref{dual-cd}
shows that the \GL\ algorithm is actually optimizing  the dual of problem~(\ref{eqn-1}), with the optimization variable being $\M W$.
\section{A Corrected \GL\   block coordinate-descent algorithm}\label{primal-cd}
Recall that (\ref{grad-4}) is a variant of (\ref{grad-2}), where the
dependence of the covariance sub-matrix $\M{W}_{11}$ on
$\B{\theta}_{12}$ is explicit.  With $\B\alpha=\B{\theta}_{12}w_{22}$
(with $w_{22}\geq 0$ fixed), $\B{\Theta}_{11}\succ 0$, (\ref{grad-4}) is equivalent to the
stationary condition for
\begin{equation}\label{lasso1-primal}
  \mini_{\B\alpha \in \Re^{p-1}} \left\{ \half \B\alpha'\B{\Theta}^{-1}_{11} \B\alpha +  \B\alpha' \M{s}_{12} + \lambda \|\B\alpha\|_1 \right\}.
\end{equation}
If $\hat{\B\alpha}$ is the minimizer of (\ref{lasso1-primal}), then
$\hat{\B{\theta}}_{12}=\hat{\B\alpha}/w_{22}$.  To complete the
optimization for the entire row/column we need to update $\theta_{22}$. This follows simply from (\ref{partition-2})
\begin{equation}
  \label{eq:theta22}
  \hat{\theta}_{22}=\frac1{w_{22}}+\hat{\B\theta}_{21}\B{\Theta}_{11}^{-1}\hat{\B\theta}_{12},
\end{equation}
with $w_{22}=s_{22}+\lambda$.

To solve (\ref{lasso1-primal}) we need
$\B{\Theta}^{-1}_{11}$ for each block update.
We achieve this by maintaining $\M{W}=\B{\Theta}^{-1}$ as the iterations proceed.
Then for each block
\begin{itemize}
\item we obtain $\B{\Theta}_{11}^{-1}$ from
\begin{equation}\label{update-rank-one-1}
\B{\Theta}^{-1}_{11} = \M W_{11} - \M{w}_{12}\M{w}_{21}/w_{22};
\end{equation}
\item once $\B{\theta}_{12}$ is updated, the \emph{entire} working covariance matrix 
$\M W$ is updated (in particular the portions $\M W_{11}$ and $\M{w}_{12}$), via the identities in (\ref{partition-2}), using the known $\B\Theta_{11}^{-1}$.
\end{itemize}
Both these steps are simple rank-one updates with a total cost of $O(p^2)$ operations.

We refer to this as the  primal graphical lasso or \PGL, which we present in Algorithm~\ref{algo-primal-glasso}.  
\begin{algorithm}\caption{\PGL\ Algorithm} \label{algo-primal-glasso}
\begin{enumerate}
\item Initialize $\M{W} = \mathrm{diag}(\s)+\lambda\M{I}$, and $\B\Theta = \M{W}^{-1}$.
\item Cycle around the columns repeatedly, performing the following steps till convergence:
\begin{enumerate}
\item Rearrange the rows/columns so that the target column is last (implicitly).
\item Compute $\B\Theta_{11}^{-1}$ using (\ref{update-rank-one-1}).
\item Solve (\ref{lasso1-primal}) for $\B\alpha$, using as warm starts the solution from the previous round of row/column updates. Update
  $\hat{\B{\theta}}_{12}=\hat{\B\alpha}/w_{22}$, and $\hat{\theta}_{22}$ using (\ref{eq:theta22}).
\item Update $\B\Theta$ and $\M{W}$ using (\ref{partition-2}), ensuring that $\B\Theta\M W= \M I_p$.
\end{enumerate}
\item Output the solution $\B\Theta$ (precision) and its exact inverse $\M W$ (covariance).
\end{enumerate}
\end{algorithm}

The \PGL\ algorithm requires slightly more work than \GL, since an
additional $O(p^2)$ operations have to be performed before and after
each block update. In return we have that after every row/column update, $\B
\Theta$ and $\M W$ are positive definite (for $\lambda >0$) and $\B\Theta\M W= \M I_p$.

\section{What is  \GL\  actually solving?}\label{dual-cd}
Building upon the framework developed in Section~\ref{sec:review}, 
we now proceed to establish that \GL\ solves the convex dual of problem~(\ref{eqn-1}), by block coordinate ascent. 
We reach this conclusion via elementary arguments, closely aligned with the framework we develop in Section~\ref{sec:review}. 
The approach we present here is intended for an audience without much of a familiarity with convex duality theory~\cite{BV2004}.  

Figure~\ref{fig-non-mono1} illustrates that \GL\ is an ascent algorithm on the dual of the problem~\ref{eqn-1}.  
The red curve in the left
plot shows the dual objective rising monotonely, and the rightmost
plot shows that the increments are indeed positive. There is an added twist though: in solving the block-coordinate update, \GL\ solves instead the dual of {\em that} subproblem.

\subsection{Dual of the $\ell_1$ regularized log-likelihood} \label{sec:dual-l1-likhd}
We present below the following lemma, 
the conclusion of which also appears in \cite{BGA2008}, but we use the framework developed in Section~\ref{sec:review}.

\begin{lem}\label{lab:lem-2}
Consider the primal problem (\ref{eqn-1}) and its stationarity conditions (\ref{stat-1}).
These are equivalent to the stationarity conditions for the box-constrained SDP
\begin{equation}\label{box-sdp-1}
\maxi_{\tilde{\B{\Gamma}}:\; \|\tilde{\B{\Gamma}}\|_\infty \leq \lambda}\; g(\tilde{\B{\Gamma}}):=  \log\det(\s + \tilde{\B{\Gamma}}) + p  
\end{equation}
under the transformation $\s + \tilde{\B{\Gamma}} = \B\Theta^{-1}$.
\end{lem}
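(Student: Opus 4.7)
The plan is to show that both stationarity systems can be rewritten as the same entry-wise inclusion on the pair $(\B\Theta, \tilde{\B\Gamma})$ linked by $\M S + \tilde{\B\Gamma} = \B\Theta^{-1}$, and then to note that this inclusion is a bijective correspondence. First I would re-express the primal optimality system~(\ref{stat-1}) by setting $\tilde{\B\Gamma} := \lambda \B\Gamma$. This immediately produces $\B\Theta^{-1} = \M S + \tilde{\B\Gamma}$, the entry-wise bound $\|\tilde{\B\Gamma}\|_\infty \leq \lambda$ (since $|\gamma_{jk}| \leq 1$), and the inclusion $\tilde{\gamma}_{jk} \in \lambda \cdot \mathrm{Sign}(\theta_{jk})$ coming from~(\ref{eq:Gamma}). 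In particular, $\tilde{\gamma}_{jk}$ sits strictly inside $(-\lambda,\lambda)$ whenever $\theta_{jk}=0$, and attains $\pm\lambda$ whenever $\theta_{jk}$ is nonzero.

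Next I would write down the KKT conditions for~(\ref{box-sdp-1}). Since $\log\det$ is concave on the open cone $\{\M M \succ \M 0\}$ and the constraint $\|\tilde{\B\Gamma}\|_\infty \leq \lambda$ is just independent box bounds $-\lambda \leq \tilde{\gamma}_{jk} \leq \lambda$, the KKT conditions are both necessary and sufficient whenever $\M S + \tilde{\B\Gamma} \succ \M 0$. Using $\nabla_{\tilde{\B\Gamma}} g(\tilde{\B\Gamma}) = (\M S + \tilde{\B\Gamma})^{-1}$ and introducing nonnegative multipliers $\mu^+_{jk}, \mu^-_{jk}$ for the two sides of the box, stationarity and complementary slackness read
\begin{equation*}
\theta_{jk} = \mu^+_{jk} - \mu^-_{jk}, \qquad \mu^+_{jk}(\tilde{\gamma}_{jk} - \lambda) = 0, \qquad \mu^-_{jk}(\tilde{\gamma}_{jk} + \lambda) = 0,
\end{equation*}
with $\B\Theta := (\M S + \tilde{\B\Gamma})^{-1}$. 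A three-case analysis on whether $\tilde{\gamma}_{jk} \in \{\lambda\}$, $\{-\lambda\}$, or $(-\lambda,\lambda)$ collapses this system to the single entry-wise inclusion $\tilde{\gamma}_{jk} \in \lambda \cdot \mathrm{Sign}(\theta_{jk})$, which is exactly the condition produced by the primal.

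To close the loop, I would observe that the map $(\B\Theta, \B\Gamma) \mapsto \tilde{\B\Gamma} = \lambda \B\Gamma$ is a bijection between primal stationary pairs and dual stationary points: given a dual KKT point $\tilde{\B\Gamma}$, define $\B\Theta := (\M S + \tilde{\B\Gamma})^{-1} \succ \M 0$ and $\B\Gamma := \tilde{\B\Gamma}/\lambda$, which by the entry-wise analysis above lies in $\mathrm{Sign}(\B\Theta)$, so~(\ref{stat-1}) holds; the reverse direction is the rewrite from the first paragraph. In particular, the diagonal condition~(\ref{glob-stat-diag1}) drops out automatically since $\theta_{ii} > 0$ forces $\tilde{\gamma}_{ii} = \lambda$, i.e.\ $w_{ii} = s_{ii} + \lambda$.

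The main obstacle, modest as it is, lies in aligning the domain constraints on the two sides: the primal requires $\B\Theta \succ \M 0$, while the dual objective is defined only on $\M S + \tilde{\B\Gamma} \succ \M 0$, and I would argue these coincide through the substitution (with the positivity of the sample-covariance-plus-diagonal-shift ensuring that $\tilde{\B\Gamma} \in \lambda \M I$ is a strictly feasible dual point, so Slater holds and strong duality is not needed separately). A second mild technicality is handling the set-valued $\mathrm{Sign}$ when $\theta_{jk}$ vanishes, but this is exactly why the KKT conditions involve the multiplier pair $(\mu^+_{jk}, \mu^-_{jk})$, and the case analysis absorbs it cleanly.
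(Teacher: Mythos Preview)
Your proposal is correct and follows essentially the same route as the paper: both arguments set $\tilde{\B\Gamma}=\lambda\B\Gamma$, read off $\B\Theta^{-1}=\M S+\tilde{\B\Gamma}$, and then match the primal subgradient inclusion $\tilde{\gamma}_{jk}\in\lambda\,\mathrm{Sign}(\theta_{jk})$ against the KKT system for the box-constrained SDP, checking both directions of the correspondence. The only cosmetic difference is that the paper packages the box-constraint multipliers into a single nonnegative matrix $\M P=\abs(\B\Theta)$ paired with $\sgn(\tilde{\B\Gamma})$, whereas you use the textbook pair $(\mu^+_{jk},\mu^-_{jk})$ and a three-case analysis; these are equivalent parameterizations. (One small slip: when $\theta_{jk}=0$ the inclusion allows $\tilde{\gamma}_{jk}$ to lie anywhere in $[-\lambda,\lambda]$, not strictly inside, but this does not affect your argument.)
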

\begin{proof}
The (sub)gradient conditions (\ref{stat-1}) can be rewritten as:
\begin{equation}\label{stat-11}
 - (\s + \lambda\B\Gamma)^{-1}  + \B\Theta = \M 0 
\end{equation}
where $\B\Gamma = \sgn(\B\Theta)$.  
We write $\tilde{\B{\Gamma}} = \lambda \B\Gamma$ and observe that
$\|\tilde{\B{\Gamma}}\|_\infty \leq \lambda$.
Denote by $\abs(\B\Theta)$ the matrix with element-wise absolute values.

Hence if $(\B\Theta,\B\Gamma)$ satisfy (\ref{stat-11}), the substitutions
\begin{equation}\label{subst-global}
\tilde{\B{\Gamma}} = \lambda\B\Gamma; \;\;  \;\; \M{P} = \abs(\B\Theta)  
\end{equation}
satisfy the following set of equations:
\begin{equation} \label{kkt-dual-gl-cond}
  \begin{array}{rcl}
- (\s + \tilde{\B{\Gamma}})^{-1}  + \M{P} * \sgn(\tilde{\B\Gamma}) &=& \M 0\\
 \M{P}*(\abs(\tilde{\B{\Gamma}}) - \lambda\M{1}_p\M{1}_p') &=&\M 0\\
 \|\tilde{\B{\Gamma}}\|_\infty &\leq &\lambda. 
      \end{array}
\end{equation}
In the above, $\M P$ is a symmetric $p\times p$ matrix with non-negative entries, $\M{1}_p\M{1}_p'$ denotes a $p\times p$ matrix of ones,  and the operator `$*$' denotes element-wise product.
We observe that (\ref{kkt-dual-gl-cond}) are the KKT optimality conditions for the box-constrained SDP~(\ref{box-sdp-1}).
Similarly, the transformations $\B\Theta= \M{P} * \sgn(\tilde{\B\Gamma})$ and  $\B\Gamma=\tilde{\B{\Gamma}}/\lambda$
show that conditions (\ref{kkt-dual-gl-cond}) imply condition~(\ref{stat-11}).
Based on~(\ref{stat-11}) the optimal solutions of the two problems 
(\ref{eqn-1}) and (\ref{box-sdp-1}) are related by $\s + \tilde{\B{\Gamma}} = \B\Theta^{-1}$.
\end{proof}
 
Notice that for the dual, the optimization variable is $\tilde{\B\Gamma}$, with $\s +\tilde{\B\Gamma}={\B\Theta}^{-1}=\M W$. In other words, the dual problem solves for $\M W$ rather than $\B \Theta$, a fact that is suggested  by the \GL\ algorithm.
\begin{rem}
The equivalence of the solutions to problems (\ref{box-sdp-1}) and (\ref{eqn-1}) as described above can also be derived via 
 convex duality theory~\citep{BV2004}, which shows that (\ref{box-sdp-1}) is a dual function of the $\ell_1$ regularized negative log-likelihood  (\ref{eqn-1}). Strong duality holds, hence the optimal solutions of the two problems coincide~\cite{BGA2008}.
\end{rem}

We now consider solving  (\ref{kkt-dual-gl-cond}) for the last block  $\tilde{\B \gamma}_{12}$    (excluding diagonal), holding the rest of $\tilde{\B\Gamma}$ fixed. The corresponding equations are
\begin{equation}  \label{block-dual-11}
  \begin{array}{rcl}
- \B{\theta}_{12}  + \M{p}_{12} * \sgn(\tilde{\B\gamma}_{12}) &=& \M 0\\
 \M{p}_{12}*(\abs(\tilde{\B{\gamma}}_{12}) - \lambda\M{1}_{p-1}) &=&\M 0\\
 \|\tilde{\B{\gamma}}_{12}\|_\infty &\leq &\lambda. 
      \end{array}
\end{equation}
The only non-trivial translation is the ${\B\theta}_{12}$ in the first equation. We must express this in terms of the optimization variable $\tilde{\B\gamma}_{12}$. Since ${\M s}_{12}+\tilde{\B\gamma}_{12}={\M w}_{12}$, using the identities in  (\ref{partition-1}),  we have 
$\M{W}^{-1}_{11} (\M{s}_{12} + \tilde {\B \gamma}_{12})= -\B{\theta}_{12}/\theta_{22}$. 
Since $\theta_{22}>0$, we can redefine $\tilde {\M p}_{12}= {\M p}_{12}/\theta_{22}$, to get
\begin{equation}  \label{block-dual-11a}
  \begin{array}{rcl}
 \M{W}^{-1}_{11} (\M{s}_{12} + \tilde {\B \gamma}_{12}) + \tilde{\M p}_{12} * \sgn(\tilde{\B\gamma}_{12}) &=& \M 0\\
 \tilde{\M{p}}_{12}*(\abs(\tilde{\B{\gamma}}_{12}) - \lambda\M{1}_{p-1}) &=&\M 0\\
 \|\tilde{\B{\gamma}}_{12}\|_\infty &\leq &\lambda. 
      \end{array}
\end{equation}

The following lemma shows that a block update of \GL\ solves~(\ref{block-dual-11a}) (and hence~(\ref{block-dual-11})), a block of stationary conditions for the dual of the graphical lasso problem. Curiously, \GL\ does this not directly, but by solving the dual of the 
QP corresponding to this block of equations. 
\begin{lem}\label{lem:step1}
Assume $\M{W}_{11} \succ \M{0}$. The stationarity equations  
\begin{equation}\label{grad-2-1-0}
\M{W}_{11} \hat{\B\beta}  + \M{s}_{12} + \lambda\hat{\B{\gamma}}_{12}=0,
\end{equation}
where $\hat{\B\gamma}_{12}\in\mbox{Sign}(\hat{\B\beta})$, correspond to the solution  of the $\ell_1$-regularized QP:
\begin{equation}\label{grad-2-1-L1-0}
\mini_{\B\beta\in\Re^{p-1}}\; \half {\B\beta}'\M{W}_{11} {\B\beta} + {\B\beta}'\M{s}_{12} + \lambda\|{\B\beta}\|_1.
\end{equation}
Solving (\ref{grad-2-1-L1-0}) is equivalent to solving the following box-constrained QP:
\begin{equation}\label{box-qp1-0}
\mini_{\B{\gamma}\in \Re^{p-1}} \;\half (\M{s}_{12} + \B{\gamma})' \M W^{-1}_{11} (\M{s}_{12} + \B{\gamma}) \;\;\sbt \;\;\|\B{\gamma}\|_\infty \leq \lambda, 
\end{equation} 
with stationarity conditions given by (\ref{block-dual-11a}),
where the  $\hat{\B\beta}$ and $\tilde{\B{\gamma}}_{12}$ are related by
\begin{equation}\label{prim-dual-corresp-0}
\hat{\B\beta} = - \M W_{11}^{-1}(\M{s}_{12} + \tilde{\B{\gamma}}_{12}).
\end{equation}
\end{lem}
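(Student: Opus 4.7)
The plan is to establish the lemma in three steps, corresponding to the three assertions: (i) equation (\ref{grad-2-1-0}) is the (sub)gradient stationarity condition for the strongly convex $\ell_1$-QP (\ref{grad-2-1-L1-0}); (ii) problem (\ref{box-qp1-0}) is the convex (Lagrangian) dual of (\ref{grad-2-1-L1-0}), with the link (\ref{prim-dual-corresp-0}); and (iii) the KKT conditions of (\ref{box-qp1-0}) can be rewritten in the form (\ref{block-dual-11a}).

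For (i), since $\M W_{11}\succ \M 0$ the objective in (\ref{grad-2-1-L1-0}) is strongly convex, and the unique minimizer is characterized by the vanishing of the subdifferential: $\M W_{11}\B\beta + \M s_{12} + \lambda \B\gamma_{12}=\M 0$ with $\B\gamma_{12}\in\mathrm{Sign}(\B\beta)$, which is exactly (\ref{grad-2-1-0}). This step is immediate.

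For (ii), I would use the variational representation $\lambda\|\B\beta\|_1 = \max_{\|\B\gamma\|_\infty\leq \lambda}\B\gamma'\B\beta$ to recast (\ref{grad-2-1-L1-0}) as the saddle-point problem
\begin{equation*}
\min_{\B\beta\in\Re^{p-1}}\max_{\|\B\gamma\|_\infty\leq\lambda}\;\tfrac12\B\beta'\M W_{11}\B\beta + \B\beta'(\M s_{12}+\B\gamma).
\end{equation*}
Strong duality holds (Sion's minimax: the inner function is concave-linear in $\B\gamma$ on a compact convex set and strongly convex in $\B\beta$), so I may swap min and max. The inner unconstrained quadratic in $\B\beta$ is minimized at $\hat{\B\beta}=-\M W_{11}^{-1}(\M s_{12}+\B\gamma)$, which establishes (\ref{prim-dual-corresp-0}) once the optimal $\B\gamma$ is identified with $\tilde{\B\gamma}_{12}$. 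Plugging back and flipping sign yields exactly the box-QP (\ref{box-qp1-0}).

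For (iii), I form the KKT conditions for (\ref{box-qp1-0}). The gradient of the objective at $\B\gamma$ is $\M W_{11}^{-1}(\M s_{12}+\B\gamma)$, and introducing nonnegative multipliers $\mu_j^+,\mu_j^-$ for the constraints $\gamma_j\leq \lambda$ and $-\gamma_j\leq \lambda$, stationarity gives $\M W_{11}^{-1}(\M s_{12}+\tilde{\B\gamma}_{12}) + (\boldsymbol\mu^+ - \boldsymbol\mu^-)=\M 0$ together with complementary slackness $\mu_j^\pm(\lambda \mp \tilde\gamma_{12,j})=0$. Setting $\tilde{\M p}_{12,j}:=\mu_j^++\mu_j^-\geq 0$, and observing that at most one of $\mu_j^+,\mu_j^-$ is active so that $\boldsymbol\mu^--\boldsymbol\mu^+ = \tilde{\M p}_{12}*\sgn(\tilde{\B\gamma}_{12})$ and $\tilde{\M p}_{12,j}>0$ forces $|\tilde\gamma_{12,j}|=\lambda$, reproduces precisely the three equations of (\ref{block-dual-11a}). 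The primal-dual formula (\ref{prim-dual-corresp-0}) then follows from the first KKT equation rearranged, closing the chain.

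The only mildly delicate point is the strong-duality swap in step (ii); this is the ``hard part,'' but it is entirely standard because $\M W_{11}\succ \M 0$ supplies strong convexity in $\B\beta$ and the constraint set on $\B\gamma$ is compact, so Sion's theorem applies (equivalently, Slater's condition for (\ref{grad-2-1-L1-0}) holds trivially since the $\ell_1$ penalty is finite-valued). Steps (i) and (iii) are routine subgradient/KKT bookkeeping.
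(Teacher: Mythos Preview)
Your argument is correct and complete, but it takes a different route from the paper. The paper deliberately avoids invoking convex duality machinery: after noting that (\ref{grad-2-1-0}) is the KKT condition of (\ref{grad-2-1-L1-0}), it simply premultiplies by $\M W_{11}^{-1}$ and then makes the elementary substitutions $\tilde{\B\gamma}_{12}=\lambda\hat{\B\gamma}_{12}$, $\tilde{\M p}_{12}=\abs(\hat{\B\beta})$ to show that the resulting equation coincides with (\ref{block-dual-11a}), and vice versa; only afterwards does it remark that (\ref{block-dual-11a}) happens to be the KKT system of the box-QP (\ref{box-qp1-0}). In other words, the paper matches stationarity conditions directly, whereas you derive (\ref{box-qp1-0}) as the Lagrangian dual via the $\ell_1/\ell_\infty$ Fenchel pairing and a minimax swap. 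Your approach is cleaner conceptually and immediately yields the primal--dual link (\ref{prim-dual-corresp-0}) as the inner argmin; the paper's approach is more hands-on and is pitched, as they say, at readers without a duality background (indeed Remark~\ref{lab:rem1} in the paper explicitly mentions your route as the alternative derivation). One small slip in your step (iii): with your sign conventions the identity should read $\boldsymbol\mu^{+}-\boldsymbol\mu^{-}=\tilde{\M p}_{12}*\sgn(\tilde{\B\gamma}_{12})$, not $\boldsymbol\mu^{-}-\boldsymbol\mu^{+}$, so that it matches the first line of (\ref{block-dual-11a}); this does not affect the argument.
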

\begin{proof}
(\ref{grad-2-1-0}) is the KKT optimality condition for the $\ell_1$ regularized QP (\ref{grad-2-1-L1-0}).
We rewrite (\ref{grad-2-1-0}) as
\begin{equation}\label{grad-2-2}
\hat{\B\beta}  + \M{W}^{-1}_{11} (\M{s}_{12} + \lambda\hat{\B{\gamma}}_{12})=0. 
\end{equation}
Observe that $\hat{\beta}_i= \sgn(\hat{\beta}_i)|\beta_i|\;\forall i$  and $\|\hat{\B{\gamma}}_{12}\|_\infty \leq 1$.
Suppose $\hat{\B\beta},\;\hat{\B{\gamma}}_{12}$ satisfy (\ref{grad-2-2}), then the substitutions 
\begin{equation}\label{substitute1}
\tilde{\B{\gamma}}_{12}= \lambda\hat{\B{\gamma}}_{12}, \;\;  \;\; \tilde{\M p}_{12}= \abs(\hat{\B\beta})
\end{equation}
in (\ref{grad-2-2}) satisfy  the stationarity conditions (\ref{block-dual-11a}).
It turns out that  (\ref{block-dual-11a}) is equivalent to the KKT optimality conditions of the 
box-constrained QP~(\ref{box-qp1-0}).  
Similarly, we note that if $\tilde{\B{\gamma}}_{12}, \tilde{\M p}_{12}$ satisfy (\ref{block-dual-11a}), then 
the substitution
$$\hat{\B{\gamma}}_{12}= \tilde{\B{\gamma}}_{12}/\lambda ; \;\;  \hat{\B\beta} = \tilde{\M p}_{12}*\sgn(\tilde{\B{\gamma}}_{12})$$  
satisfies (\ref{grad-2-2}). Hence the $\hat{\B\beta}$ and $\tilde{\B{\gamma}}_{12}$ are 
related by~(\ref{prim-dual-corresp-0}).
\end{proof}
 

\begin{rem}\label{lab:rem1}
The above result can also be derived via convex duality theory\citep{BV2004}, where 
(\ref{box-qp1-0}) is actually the Lagrange dual of the $\ell_1$ regularized QP (\ref{grad-2-1-L1-0}), with (\ref{prim-dual-corresp-0}) denoting the primal-dual relationship. 
\cite[Section 3.3]{BGA2008} interpret (\ref{box-qp1-0}) as an $\ell_1$ penalized regression problem (using convex 
duality theory) and explore connections with the set up of~\cite{MB2006}.
\end{rem}

Note that the QP (\ref{box-qp1-0}) is a (partial) optimization over
the variable $\M{w}_{12}$ only (since $\M{s}_{12}$ is fixed); the
sub-matrix $\M{W}_{11}$ remains fixed in the QP. Exactly one
row/column of $\M{W}$ changes when the block-coordinate algorithm of
\GL\ moves to a new row/column, unlike an explicit full matrix update
in $\M W_{11}$, which is required if $\B{\theta}_{12}$ is updated.
This again emphasizes that \GL\ is operating on the covariance matrix
instead of $\B\Theta$.  We thus arrive at the following conclusion:
\begin{thm}\label{lab:lem-3}
\GL\ performs block-coordinate ascent on the box-constrained SDP (\ref{box-sdp-1}),
the Lagrange dual of the primal problem (\ref{eqn-1}). Each of the block steps are themselves box-constrained QPs, which \GL\ optimizes via their Lagrange duals.
\end{thm}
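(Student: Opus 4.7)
The plan is to stitch together Lemmas~\ref{lab:lem-2} and~\ref{lem:step1} by verifying explicitly that each row/column sweep of \GL\ is exactly one block step of coordinate ascent on $g(\tilde{\B\Gamma})$, and that the work done inside the sweep is nothing other than solving the Lagrange dual of the corresponding block QP. First I would invoke Lemma~\ref{lab:lem-2} to identify (\ref{box-sdp-1}) as the dual of (\ref{eqn-1}) under the bijection $\M W=\s+\tilde{\B\Gamma}$. The dual variable decomposes along row/column blocks in the same way as $\M W$, so a ``block'' of the dual means choosing $\tilde{\B\gamma}_{12}\in\Re^{p-1}$ with $\|\tilde{\B\gamma}_{12}\|_\infty\leq\lambda$, with the remaining entries of $\tilde{\B\Gamma}$ held fixed. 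In particular the stationarity condition (\ref{glob-stat-diag1}) pins the diagonal $w_{22}=s_{22}+\lambda$, so the diagonal is unaffected by block updates.

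Second, I would show that the single-block ascent on (\ref{box-sdp-1}) collapses exactly to the box-QP (\ref{box-qp1-0}). With $\M W_{11}$ and $w_{22}$ fixed, the Schur-complement identity
\begin{equation*}
\det(\M W)=\det(\M W_{11})\cdot\bigl(w_{22}-\M w_{21}\M W_{11}^{-1}\M w_{12}\bigr)
\end{equation*}
shows that, as a function of $\M w_{12}=\M s_{12}+\tilde{\B\gamma}_{12}$ alone, maximizing $\log\det(\M W)$ is equivalent to minimizing $(\M s_{12}+\tilde{\B\gamma}_{12})'\M W_{11}^{-1}(\M s_{12}+\tilde{\B\gamma}_{12})$. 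Imposing $\|\tilde{\B\gamma}_{12}\|_\infty\leq\lambda$ reproduces (\ref{box-qp1-0}) verbatim. Concavity of $\log\det$ (equivalently, positive-definiteness of $\M W_{11}^{-1}$, which follows from Lemma~\ref{lem:warm-start}) guarantees that the block maximizer is unique and characterized by (\ref{block-dual-11a}).

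Third, I would invoke Lemma~\ref{lem:step1}: the box-QP (\ref{box-qp1-0}) has Lagrange dual (\ref{grad-2-1-L1-0}) — the $\ell_1$-regularized quadratic that \GL\ actually minimizes at each sweep — with the primal-dual correspondence (\ref{prim-dual-corresp-0}), $\hat{\B\beta}=-\M W_{11}^{-1}(\M s_{12}+\tilde{\B\gamma}_{12})$. This is precisely the identity (\ref{eq-id-1}) that \GL\ uses to read off $\hat{\M w}_{12}$ from $\hat{\B\beta}$. Therefore the $\hat{\M w}_{12}$ returned by the \GL\ sweep is exactly the $\M s_{12}+\tilde{\B\gamma}_{12}^\star$ obtained from the block-maximizer of (\ref{box-sdp-1}), proving both assertions at once.

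The only technical obstacle I foresee is the Schur-complement reduction in the second step: one must carefully separate the pieces of $\log\det(\s+\tilde{\B\Gamma})$ that depend on $\tilde{\B\gamma}_{12}$ from the pieces that do not, and confirm via (\ref{glob-stat-diag1}) that the diagonal contribution $w_{22}$ genuinely stays fixed across a block update. Once that is in hand, the theorem is essentially a bookkeeping exercise translating between the KKT systems (\ref{stat-11}), (\ref{block-dual-11a}), (\ref{grad-2-1-0}) already laid out in Section~\ref{sec:review} and Lemmas~\ref{lab:lem-2}–\ref{lem:step1}.
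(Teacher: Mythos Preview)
Your proposal is correct and follows the same overall architecture as the paper: Lemma~\ref{lab:lem-2} identifies the dual, a block reduction isolates the box-QP (\ref{box-qp1-0}), and Lemma~\ref{lem:step1} supplies the inner duality with the lasso step that \GL\ actually executes. The one genuine difference is in how you carry out the middle step. The paper never writes down the block objective; it instead reads off the block stationarity conditions (\ref{block-dual-11})--(\ref{block-dual-11a}) directly from the global KKT system (\ref{kkt-dual-gl-cond}) and then recognizes them, via Lemma~\ref{lem:step1}, as the KKT conditions of (\ref{box-qp1-0}). You instead work at the level of the objective, using the Schur-complement factorization $\det(\M W)=\det(\M W_{11})\,(w_{22}-\M w_{21}\M W_{11}^{-1}\M w_{12})$ to reduce the block maximization of $g$ to (\ref{box-qp1-0}) explicitly. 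Your route is slightly more transparent about \emph{why} the block problem is a QP (monotonicity of $\log$ plus a quadratic Schur complement), and it makes the handling of the diagonal entry self-contained: $\det(\M W)$ is increasing in $w_{22}$, so the box constraint forces $w_{22}=s_{22}+\lambda$, which is cleaner than appealing to the global condition (\ref{glob-stat-diag1}). The paper's route, staying entirely at the KKT level, is more uniform with the ``normal equations'' framework it has set up and avoids introducing the determinant factorization. Either argument is complete; the paper simply states Theorem~\ref{lab:lem-3} as the conclusion of the preceding development without a separate proof block.
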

In our annotation perhaps \GL\ should be called \DDGL, since it performs dual block updates for the dual of the graphical lasso problem.
\citet{BGA2008}, the paper that inspired the original \GL\ article \citep{FHT2007a}, also operates on the dual. They however solve the block-updates directly (which are box constrained QPs) using interior-point methods.
\section{A New Algorithm --- \DPGL}
In Section \ref{primal-cd}, we described \PGL, a primal  coordinate-descent method. 
For every row/column we need to solve a lasso problem (\ref{lasso1-primal}), which operates on a quadratic form 
corresponding to the square matrix $\B\Theta_{11}^{-1}$. There are two problems with this approach:
\begin{itemize}
\item the matrix $\B\Theta_{11}^{-1}$ needs to be constructed at every row/column update with complexity $O(p^2)$;
\item $\B\Theta_{11}^{-1}$ is dense.
\end{itemize} 
We now show how a simple modification of the $\ell_1$-regularized QP leads to a box-constrained QP with attractive 
computational properties. 

The KKT optimality conditions for (\ref{lasso1-primal}), following (\ref{grad-4}), can be written as:
\begin{equation}\label{lasso1-primal-kkt1}
\B{\Theta}^{-1}_{11} \B{\alpha} + \M{s}_{12} + \lambda \sgn(\B{\alpha}) = 0.
\end{equation}
Along the same lines of the derivations used in Lemma~\ref{lem:step1}, the condition above is  equivalent to
\begin{equation}\label{dual-primal-cd-kkt}
  \begin{array}{rcl}
\tilde{\M q}_{12}*\sgn(\tilde{\B{\gamma}}) +  \B{\Theta}_{11}(\M{s}_{12} + \tilde{\B{\gamma}})& =& \M{0}\\
\tilde{\M q}_{12}*(\abs(\tilde{\B{\gamma}}) - \lambda \M{1}_{p-1} )&=&0\\
  \|\tilde{\B{\gamma}}\|_\infty &\leq& \lambda  
\end{array}
\end{equation}
for some vector (with non-negative entries) $\tilde{\M q}_{12}$. (\ref{dual-primal-cd-kkt}) are the KKT optimality conditions for the following box-constrained QP:
\begin{equation}\label{dual-primal-cd}
\mini_{{\B\gamma} \in \Re^{p-1}} \; \half (\M{s}_{12} + {\B\gamma})'\B{\Theta}_{11}(\M{s}_{12} 
+ {\B\gamma}); \;\;  
\sbt \;\; \|{\B\gamma}\|_\infty \leq \lambda .
\end{equation}
The optimal solutions of (\ref{dual-primal-cd}) and (\ref{lasso1-primal-kkt1}) are related by
\begin{equation}
  \label{eq:dpgl}
\hat{\B{\alpha}} =   -\B{\Theta}_{11}(\M{s}_{12} + \tilde{\B{\gamma}}),
\end{equation}
 a consequence of~(\ref{lasso1-primal-kkt1}), with $\hat{\B{\alpha}}=\hat{\B\theta}_{12}\cdot w_{22}$ and $w_{22}=s_{22}+\lambda$.
The diagonal $\theta_{22}$ of the precision matrix  is updated via (\ref{partition-2}):
\begin{equation}\label{update-theta22}
\hat\theta_{22}= \frac{1 - (\M{s}_{12} + \tilde{\B{\gamma}})'\hat{\B{\theta}}_{12}}{w_{22}}
\end{equation}

By strong duality, the box-constrained QP (\ref{dual-primal-cd}) with
its optimality conditions (\ref{dual-primal-cd-kkt}) is equivalent to
the lasso problem (\ref{lasso1-primal}).
Now both the problems listed at the beginning of the section are removed.
The problem matrix ${\B\Theta}_{11}$ is sparse, and no $O(p^2)$ updating is required after each block.
 
\begin{algorithm}\caption{\DPGL\ algorithm} \label{algo-glasso-sparse}
\begin{enumerate}
\item Initialize $\B{\Theta} = \mathrm{diag}(\s+\lambda \M I)^{-1}$.
\item Cycle around the columns repeatedly, performing the following steps till convergence:
\begin{enumerate}
\item Rearrange the rows/columns so that the target column is last (implicitly).
\item  Solve (\ref{dual-primal-cd}) for $\tilde{\B\gamma}$ and update  
$$\hat{\B{\theta}}_{12}=- \B\Theta_{11}(\M{s}_{12} + \tilde{\B{\gamma}})/ w_{22}$$
\item Solve for $\theta_{22}$ using (\ref{update-theta22}).
\item Update the working covariance $\M{w}_{12}=\M{s}_{12}  + \tilde{\B{\gamma}}$.
\end{enumerate}
\end{enumerate}
\end{algorithm}

The solutions returned at step 2(b) for $\hat{\B\theta}_{12}$ need not
be exactly sparse, even though it purports to produce the solution to
the primal block problem (\ref{lasso1-primal}), which is sparse. One
needs to use a tight convergence criterion when solving (\ref{dual-primal-cd}).  In addition, one can
threshold those elements of $\hat{\B\theta}_{12}$ for which
$\tilde{\B\gamma}$ is away from the box boundary, since those values are known to be zero.

Note that \DPGL\ does to the primal formulation~(\ref{eqn-1}) what \GL\ does to the dual. 
\DPGL\ operates on the precision matrix, whereas \GL\ operates on the covariance matrix.

\section{Computational Costs in Solving the Block QPs}\label{sec:solve-cd}
The $\ell_1$ regularized QPs appearing in (\ref{lasso-grad-3}) and (\ref{lasso1-primal})
 are of the generic form
\begin{equation}\label{gen-l1-qp}
\mini_{\M{u} \in \Re^{q}} \quad \half \M{u}'\M{A}\M{u} + \M{a}'\M{u} + \lambda \|\M{u}\|_1, 
\end{equation}
for $\M{A} \succ \M{0}$.  In this paper, we choose to use cyclical
coordinate descent for solving~(\ref{gen-l1-qp}), as it is used in
the \GL\ algorithm implementation of \citet{FHT2007a}. Moreover,
cyclical coordinate descent methods perform well with good
warm-starts.
These  are available for both (\ref{lasso-grad-3}) and 
(\ref{lasso1-primal}), since they both maintain working copies of the precision matrix, updated
after every row/column update.  There are other efficient
ways for solving (\ref{gen-l1-qp}), capable of scaling to large
problems --- for example first-order proximal methods \citep{fista-09,nest-07}, but we do not pursue them in this paper.

The box-constrained QPs appearing in (\ref{box-qp1-0}) and (\ref{dual-primal-cd})
 are of the generic form:
\begin{equation}\label{gen-box-qp}
\mini_{\M{v}\in \Re^{q}} \quad \half (\M{v} + \M{b} )'\tilde{\M A}(\M{v} + \M{b})\;\;  \sbt \; \|\M{v}\|_\infty \leq \lambda
\end{equation}
for some $\tilde{\M A} \succ \M{0}$.  As in the case above, we will
use cyclical coordinate-descent for
optimizing~(\ref{gen-box-qp}). 

In general it is more efficient to
solve (\ref{gen-l1-qp}) than (\ref{gen-box-qp}) for larger values of $\lambda$.
 This is because a large
value of $\lambda$ in (\ref{gen-l1-qp}) results in sparse solutions
$\hat{\M{u}}$; the coordinate descent algorithm can easily detect when a zero stays zero, and no further work gets done for that coordinate on that pass. 
 If the solution to (\ref{gen-l1-qp}) has
$\kappa$ non-zeros, then on average $\kappa$ coordinates need to be
updated. This leads to a cost of $O(q\kappa)$, for one full sweep
across all the $q$ coordinates.
  
On the other hand, a large $\lambda$ for (\ref{gen-box-qp}) corresponds to a weakly-regularized
solution. Cyclical coordinate procedures for this task are not as
effective. Every coordinate update of $\M{v}$ results in updating the
gradient, which requires  adding a scalar multiple of a column of $\tilde{\M A}$. 
If $\tilde{\M A}$ is dense, this leads to a cost of
$O(q)$, and for one full cycle across all the coordinates this costs
$O(q^2)$, rather than the $O(q\kappa)$ for (\ref{gen-l1-qp}).

However, our experimental results show that \DPGL\ is more efficient
than \GL, so there are some other factors in play.  When $\tilde{\M
  A}$ is sparse, there are computational savings.  If $\tilde{\M A}$
has $\kappa q$ non-zeros, the cost per column reduces on average to $O(\kappa q)$
from $O(q^2)$.  For the formulation (\ref{dual-primal-cd}) $\tilde{\M
  A}$ is $\B{\Theta}_{11}$, which is sparse for large $\lambda$.
Hence for large $\lambda$, \GL\ and \DPGL\ have similar costs.

For smaller values of $\lambda$, the box-constrained QP
(\ref{gen-box-qp}) is particularly attractive.   
Most of the coordinates in the optimal
solution $\hat{\M{v}}$ will pile up at the boundary points
$\{-\lambda,\lambda\}$, which means that the coordinates need not be
updated frequently. For problem  (\ref{dual-primal-cd}) this number is also $\kappa$, 
the number of non-zero coefficients in  the corresponding column of the precision matrix.
If $\kappa$ of the coordinates pile up at
the boundary, then one full sweep of cyclical coordinate descent
across all the coordinates will require updating gradients
corresponding to the remaining $q-\kappa$ coordinates.
Using similar calculations as before, this will cost $O(q(q-\kappa))$ operations per full cycle 
(since for small $\lambda$, $\tilde{\M A}$ will be dense).
For the $\ell_1$ regularized problem (\ref{gen-l1-qp}), no such saving is achieved, and the cost is $O(q^2)$ per cycle.

Note that to solve problem~(\ref{eqn-1}), we  need to solve a QP of a particular type~(\ref{gen-l1-qp}) or 
(\ref{gen-box-qp}) for a certain number of outer cycles (ie full sweeps across rows/columns). For every row/column update, the associated QP requires varying number of iterations to converge. It is hard to characterize all these factors and come up
with precise estimates of convergence rates of the overall algorithm.
However, we have observed that with warm-starts, on a relatively dense grid of $\lambda$s, the complexities given above are 
pretty much accurate for \DPGL\ (with warmstarts) specially when one is interested in solutions with small / moderate accuracy. 
Our experimental results in Section~\ref{sec:expt-simu} and Appendix Section~\ref{sec:times} support our observation.


We will now have a more critical look at the updates of the \GL\ algorithm and study their properties.

\section{\GL : Positive definiteness, Sparsity and Exact Inversion}\label{pos-def-sparse-exact}
As noted earlier, \GL\  operates on $\M{W}$ --- it does \emph{not} explicitly compute the inverse $\M{W}^{-1}$. It does however keep track of the estimates 
for ${\B{\theta}}_{12}$ after every row/column update. The copy of $\B\Theta$ retained by \GL\ along the row/column updates
is not the exact inverse of the optimization variable $\M{W}$.
Figure~\ref{fig-not-sparse-pd} illustrates this by plotting the squared-norm $\|(\B\Theta -  \M{W}^{-1})\|^2_F$ as a function of the iteration index. Only upon (asymptotic) convergence, will $\B{\Theta}$ be equal to 
$\M{W}^{-1}$. This can have important consequences. 

\begin{figure}[htpb]
  \centering
 \includegraphics[width=.8\textwidth]{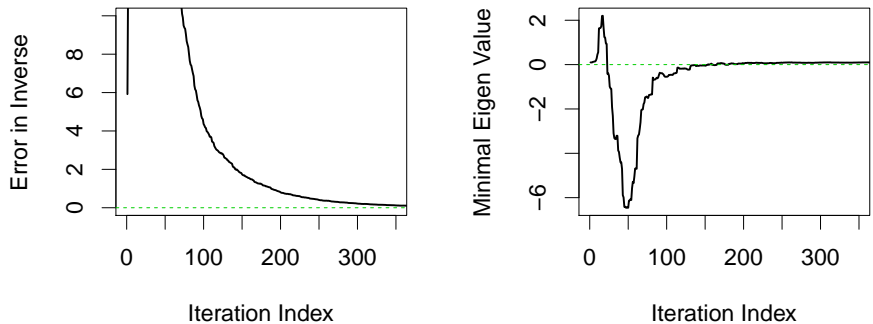}
 \caption{\small\em Figure illustrating some negative properties of \GL\ using a typical numerical example.  [Left Panel] The precision matrix produced after every row/column update need not be the exact inverse of the working covariance matrix --- the squared Frobenius norm of the error is being plotted across iterations. [Right Panel] The estimated precision matrix $\B{\Theta}$ produced by \GL\ need not be positive definite along iterations; plot shows minimal eigen-value.}
\label{fig-not-sparse-pd}
\end{figure}

In many real-life problems one only needs an approximate solution to~(\ref{eqn-1}):
\begin{itemize}
\item for computational reasons it might be impractical to obtain a solution of high accuracy;
\item from a statistical viewpoint it might be sufficient to obtain an approximate solution for $\B\Theta$ that is
\emph{both} sparse and positive definite
\end{itemize}
It turns out that the  \GL\ algorithm is not suited to this purpose.

Since the \GL\ is a block coordinate procedure on the covariance
matrix, it maintains a positive definite covariance matrix at every
row/column update. However, since the estimated precision matrix is
not the exact inverse of $\M{W}$, it need not be positive definite.
Although it is relatively straightforward to maintain an exact inverse
of $\M{W}$ along the row/column updates (via simple rank-one updates
as before), this inverse $\M{W}^{-1}$ need \emph{not} be sparse.
Arbitrary thresholding rules may be used to set some of the entries to
zero, but that might destroy the  positive-definiteness of the
matrix. Since a principal motivation of
solving~(\ref{eqn-1}) is to obtain a sparse precision matrix (which is
also positive definite), returning a dense $\M{W}^{-1}$ to~(\ref{eqn-1}) is not
desirable.
 
Figures~\ref{fig-not-sparse-pd}
 illustrates the above observations on a typical example.

 The \DPGL\ algorithm operates on the primal~(\ref{eqn-1}). Instead of
 optimizing the $\ell_1$ regularized QP (\ref{lasso1-primal}), which
 requires computing $\B{\Theta}^{-1}_{11}$, \DPGL\ optimizes
 (\ref{dual-primal-cd}). After every row/column update the precision
 matrix $\B{\Theta}$ is positive definite. The working covariance
 matrix maintained by \DPGL\
via $\M{w}_{12}:= \M{s}_{12} + \hat{\B{\gamma}}$ need not be the exact inverse of $\B{\Theta}$. Exact covariance matrix estimates,
if required, can be obtained by tracking $\B{\Theta}^{-1}$ via simple rank-one updates, as described earlier. 

Unlike \GL, \DPGL\ (and \PGL) return a sparse and positive definite precision matrix even if the row/column iterations are terminated prematurely.


\section{Warm Starts and Path-seeking Strategies}\label{sec:warm-starts}
Since we seldom know in advance a good value of $\lambda$, we often compute a sequence of solutions to (\ref{eqn-1}) for a (typically) decreasing sequence of values
$\lambda_1>\lambda_2>\ldots>\lambda_K$.
Warm-start or continuation methods use the solution at $\lambda_i$ as an initial guess for the solution at
 $\lambda_{i+1}$, and often yield great efficiency.
It turns out that for algorithms like \GL\ which operate on the dual problem, not all warm-starts necessarily lead to a convergent algorithm. We address this aspect in detail in this section.

The following lemma states the conditions under which the row/column updates of the \GL\ algorithm 
will maintain positive definiteness of the covariance matrix $\M{W}$.  
\begin{lem}\label{lem:warm-start}
Suppose $\M{Z}$ is used as a warm-start for the \GL\ algorithm. 
If $\M{Z} \succ \M{0}$ and $\| \M{Z} - \s \|_\infty \leq \lambda$,
then every row/column update of \GL\ maintains positive definiteness of the working covariance matrix $\M{W}$.
\end{lem}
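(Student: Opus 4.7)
\textbf{Proof Plan for Lemma \ref{lem:warm-start}.} The plan is to argue by induction on the row/column updates, using the Schur complement characterization of positive definiteness together with the dual interpretation of the \GL\ block step established in Lemma~\ref{lem:step1}. The invariant I would carry through the induction is the conjunction $\M W \succ \mathbf 0$ \emph{and} $\|\M W - \s\|_\infty \leq \lambda$; the hypothesis of the lemma gives the base case directly (with $\M W := \M Z$).

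For the inductive step, suppose the invariant holds for the current $\M W$, and consider an update of the last row/column, partitioned as in (\ref{break-x}). Only $\M w_{12}$ (and by symmetry $\M w_{21}$) changes; $\M W_{11}$ and $w_{22}$ are untouched. Write $\M w_{12}^{\text{old}}$ for the current value and $\M w_{12}^{\text{new}} = \s_{12} + \tilde{\B\gamma}_{12}$ for the value after \GL\ solves the block sub-problem, where by Lemma~\ref{lem:step1} $\tilde{\B\gamma}_{12}$ is the minimizer of the box-constrained QP (\ref{box-qp1-0}). The inductive hypothesis $\|\M W - \s\|_\infty \leq \lambda$ says, in particular, that $\B\gamma^{\text{old}} := \M w_{12}^{\text{old}} - \s_{12}$ satisfies $\|\B\gamma^{\text{old}}\|_\infty \leq \lambda$ and hence is \emph{feasible} for (\ref{box-qp1-0}). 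Optimality of $\tilde{\B\gamma}_{12}$ then gives the monotonicity
\begin{equation*}
(\M w_{12}^{\text{new}})' \M W_{11}^{-1}\, \M w_{12}^{\text{new}} \;=\; (\s_{12}+\tilde{\B\gamma}_{12})' \M W_{11}^{-1}(\s_{12}+\tilde{\B\gamma}_{12}) \;\leq\; (\M w_{12}^{\text{old}})' \M W_{11}^{-1}\, \M w_{12}^{\text{old}}.
\end{equation*}

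To convert this into positive definiteness, I would apply the Schur complement: since $\M W \succ \mathbf 0$ before the update, $\M W_{11} \succ \mathbf 0$ and $w_{22} - (\M w_{12}^{\text{old}})'\M W_{11}^{-1}\M w_{12}^{\text{old}} > 0$. Combining with the displayed inequality gives $w_{22} - (\M w_{12}^{\text{new}})'\M W_{11}^{-1}\M w_{12}^{\text{new}} > 0$, and the Schur complement (in the reverse direction) yields $\M W^{\text{new}} \succ \mathbf 0$. Finally, the $\infty$-norm invariant is preserved because the only entries that change are those of $\M w_{12}^{\text{new}} - \s_{12} = \tilde{\B\gamma}_{12}$, whose entries lie in $[-\lambda,\lambda]$ by the box constraint in (\ref{box-qp1-0}). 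This closes the induction.

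The main conceptual step is recognizing that the $\ell_1$-regularized lasso sub-problem (\ref{lasso-grad-3}) that \GL\ ostensibly solves is equivalent, under the primal/dual substitution $\M w_{12} = \s_{12} + \tilde{\B\gamma}_{12}$, to minimizing the quadratic form $\M w_{12}' \M W_{11}^{-1} \M w_{12}$ over the box $\|\M w_{12}-\s_{12}\|_\infty \leq \lambda$; this is exactly the content of Lemma~\ref{lem:step1} and is precisely what makes the Schur-complement slack $w_{22} - \M w_{12}'\M W_{11}^{-1}\M w_{12}$ monotone non-increasing under the update. Everything else is a routine application of the block matrix characterization of positive definiteness, so I do not anticipate a substantive obstacle once that dual interpretation is invoked.
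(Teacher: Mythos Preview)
Your argument is correct and is essentially the paper's own proof: both invoke the box-QP reformulation of the block step (Lemma~\ref{lem:step1}) to show that the update can only shrink the quadratic form $\M w_{12}'\M W_{11}^{-1}\M w_{12}$, and then read off positive definiteness from the Schur complement. Your explicit induction carrying the joint invariant $\M W\succ\mathbf 0$ and $\|\M W-\s\|_\infty\le\lambda$ is precisely what the paper leaves implicit when it passes from a single update to ``every'' update. One minor discrepancy: the paper treats the diagonal entry as being reset to $\hat w_{22}=s_{22}+\lambda$ during the update (and uses $\hat w_{22}\ge z_{22}$, which follows from the diagonal box constraint $|z_{22}-s_{22}|\le\lambda$), rather than left untouched as you assume; but since this only increases the Schur-complement slack, it requires no change to your outline.
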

\begin{proof}
Recall that the \GL\ solves the dual~(\ref{box-sdp-1}). 
Assume $\M Z$ is partitioned as in (\ref{break-x}), and the $p$th row/column is being updated.
Since $\M{Z} \succ \M{0}$,  we have both
\begin{equation}
  \M{Z}_{11} \succ \M{0} \mbox{ and } \left(z_{22} - \M{z}_{21}(\M{Z}_{11})^{-1}\M{z}_{12} \right) > 0 .
\end{equation}
Since $\M{Z}_{11}$ remains fixed, it suffices to show that after the
row/column update, the expression $(\hat{w}_{22} -
\hat{\M{w}}_{21}(\M{Z}_{11})^{-1}\hat{\M{w}}_{12} )$ remains positive.
Recall that, via standard optimality conditions we have $\hat{w}_{22}
= s_{22} + \lambda$, which makes $\hat{w}_{22} \geq z_{22}$ 
(since by assumption, $|z_{22} - s_{22}|\leq \lambda$ and $z_{22} > 0$). 
Furthermore, $\hat{\M{w}}_{21}= \M{s}_{21} + \hat{\B{\gamma}}$, where
$\hat{\B{\gamma}}$ is the optimal solution to the corresponding
box-QP~(\ref{box-qp1-0}). Since the starting solution $\M{z}_{21}$
satisfies the box-constraint~(\ref{box-qp1-0}) i.e.  $\|\M{z}_{21} -
\M{s}_{21}\|_\infty \leq \lambda$, the optimal solution of the
QP~(\ref{box-qp1-0}) improves the objective:
$$ \hat{\M{w}}_{21}(\M{Z}_{11})^{-1}\hat{\M{w}}_{12}  \leq  {\M{z}}_{21}(\M{Z}_{11})^{-1}{\M{z}}_{12}$$
Combining the above along with the fact that $\hat{w}_{22} \geq z_{22}$ we see
\begin{equation}
\hat{w}_{22} - \hat{\M{w}}_{21}(\M{Z}_{11})^{-1}\hat{\M{w}}_{12} > 0,  
\end{equation}
which implies that the new covariance estimate $\widehat{\M W} \succ \M{0}$.
\end{proof}

\begin{rem}\label{rem:num:3}
If the condition  $\| \M{Z} - \s \|_\infty \leq \lambda$  appearing in 
Lemma~\ref{lem:warm-start} is violated, then the row/column update of \GL\ need not maintain PD of the covariance matrix $\M{W}$.
\end{rem}
We have encountered many counter-examples that show this to be true, see the discussion below.

The {\tt R} package implementation of \GL\ allows the user to specify a
warm-start as a tuple $(\B\Theta_0,\M{W}_0)$. This option is typically used
in the construction of  a path algorithm.

If $(\widehat{\B{\Theta}}_{\lambda},\widehat{\M{W}}_{\lambda})$ is
provided as a warm-start for $\lambda' < \lambda$, then the \GL\
algorithm is not guaranteed to converge. It is easy to find numerical
examples by choosing the gap $\lambda-\lambda'$ to be large enough.
Among the various examples we encountered, we briefly describe one here. 
Details of the experiment/data and other examples can be found in the
online Appendix~\ref{appendix:example1}. We generated a data-matrix $\M{X}_{n\times p }$, with $n=2, p=5$
with iid standard Gaussian entries. $\s$ is the sample covariance matrix. We solved problem~(\ref{eqn-1}) using \GL\ for
$\lambda=0.9\times \max_{i \neq j} |s_{ij}|$. We took the estimated covariance
and precision matrices: $\widehat{\M{W}}_\lambda$ and
 $\widehat{\B{\Theta}}_\lambda$ as a warm-start for the \GL\
 algorithm with $\lambda'=\lambda\times 0.01$. The \GL\ algorithm
 failed to converge with this warm-start. 
We note that  $\|\widehat{\M{W}}_{\lambda} - \s \|_\infty =0.0402 \nleq \lambda'$
(hence violating the sufficient condition in Lemma~\ref{lem:warm-start-pgl}) and
after updating the first row/column via the \GL\ algorithm we observed that 
``covariance matrix'' $\M W$ has negative eigen-values ---
leading to a non-convergent algorithm.    
The above phenomenon is not surprising and easy to explain and generalize.
Since $\widehat{\M{W}}_{\lambda}$ solves
the dual (\ref{box-sdp-1}), it is necessarily of the form
$\widehat{\M{W}}_{\lambda} = \s + {\tilde{\B{\Gamma}}}$, for
$\|{\tilde{\B{\Gamma}}}\|_\infty \leq \lambda$.  In the light of
Lemma~\ref{lem:warm-start} and also Remark \ref{rem:num:3}, the
warm-start needs to be dual-feasible in order to guarantee that the
iterates $\widehat{\M W}$ remain PD and hence for the sub-problems to be
well defined convex programs.  Clearly $\widehat{\M{W}}_{\lambda}$ does not satisfy
the box-constraint $\|\widehat{\M{W}}_{\lambda} - \s \|_\infty \leq
\lambda'$, for $\lambda' < \lambda$.  
However, in practice the  \GL\ algorithm is usually seen to converge (numerically) when $\lambda'$ is quite \emph{close} to $\lambda$. 

The following lemma establishes that any PD matrix can be taken as a warm-start for \PGL\ or \DPGL to ensure a convergent algorithm. 
\begin{lem}\label{lem:warm-start-pgl}
Suppose ${\B\Phi} \succ \M{0}$ is a used as a warm-start for the \PGL\ (or \DPGL) algorithm.
Then every row/column update of \PGL\ (or \DPGL)  maintains positive definiteness of the working precision matrix $\B{\Theta}$.
\end{lem}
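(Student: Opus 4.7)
The plan is to reduce the positive-definiteness question to a Schur-complement computation, and then observe that the explicit update rule for $\theta_{22}$ given in equation~(\ref{eq:theta22}) makes the Schur complement strictly positive by construction.

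Concretely, first I would fix attention on a single block update of \PGL\ (\DPGL\ reduces to the same situation via the primal-dual relation $\hat{\B\alpha}=-\B\Theta_{11}(\M s_{12}+\tilde{\B\gamma})$ and $\hat{\B\theta}_{12}=\hat{\B\alpha}/w_{22}$, followed by the same formula for $\hat\theta_{22}$). Partition the current iterate $\B\Phi$ as in (\ref{break-x}), so that the target row/column is last. Only $\B\phi_{12}$, $\B\phi_{21}$, and $\phi_{22}$ are modified; the $(p-1)\times(p-1)$ block $\B\Phi_{11}$ stays fixed. Since $\B\Phi\succ\M 0$ by hypothesis (base case of the induction) or by the previous update (inductive step), its leading block satisfies $\B\Phi_{11}\succ\M 0$. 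Thus the sub-problems (\ref{lasso1-primal}) and (\ref{dual-primal-cd}), which require $\B\Theta_{11}\succ\M 0$, are well-defined convex programs, and they return finite minimizers $\hat{\B\alpha}$ (equivalently $\hat{\B\theta}_{12}$).

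Next I would invoke the Schur complement criterion: writing the updated precision matrix as
\[
\hat{\B\Theta}=\begin{pmatrix}\B\Phi_{11}&\hat{\B\theta}_{12}\\ \hat{\B\theta}_{21}&\hat\theta_{22}\end{pmatrix},
\]
we have $\hat{\B\Theta}\succ\M 0$ if and only if $\B\Phi_{11}\succ\M 0$ and the Schur complement $\hat\theta_{22}-\hat{\B\theta}_{21}\B\Phi_{11}^{-1}\hat{\B\theta}_{12}$ is strictly positive. The first condition is already in hand. The key step is the second: by the update rule (\ref{eq:theta22}),
\[
\hat\theta_{22}-\hat{\B\theta}_{21}\B\Phi_{11}^{-1}\hat{\B\theta}_{12}=\frac{1}{w_{22}}=\frac{1}{s_{22}+\lambda}>0,
\]
where positivity uses $\lambda>0$ and $s_{22}\ge 0$. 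Hence $\hat{\B\Theta}\succ\M 0$ after the update.

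Finally, I would close by induction over the sequence of row/column updates: the warm-start $\B\Phi\succ\M 0$ gives the base case, and each block update preserves positive definiteness by the argument above. I do not foresee a serious obstacle; the only subtlety is making sure that $\B\Theta_{11}$ is genuinely held fixed during the block update (so that the identity $\hat\theta_{22}-\hat{\B\theta}_{21}\B\Theta_{11}^{-1}\hat{\B\theta}_{12}=1/w_{22}$ is valid as stated) and that the \DPGL\ case really is covered by the same identity via the primal-dual correspondence in~(\ref{eq:dpgl}). Both points follow immediately from how Algorithms~\ref{algo-primal-glasso} and~\ref{algo-glasso-sparse} are defined, so the proof should be short and self-contained.
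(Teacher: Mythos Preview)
Your proposal is correct and follows essentially the same route as the paper: both arguments fix $\B\Phi_{11}$, invoke the Schur-complement characterization of positive definiteness, and observe from the update rule~(\ref{eq:theta22}) (equivalently the lower-right entry of~(\ref{partition-2})) that $\hat\theta_{22}-\hat{\B\theta}_{21}\B\Phi_{11}^{-1}\hat{\B\theta}_{12}=1/(s_{22}+\lambda)>0$, with \DPGL\ handled by noting it solves the same block subproblem. Your write-up is a bit more explicit about the induction and the Schur criterion, but there is no substantive difference in method.
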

\begin{proof}
Consider updating the $p$th row/column of the precision matrix.
The condition ${\B\Phi} \succ \M{0}$ is equivalent to both
$$ {\B\Phi}_{11} \succ \M{0}  \mbox{ and } \left({\phi}_{22} - {\B\Phi}_{21}({\B\Phi}_{11})^{-1}{\B\Phi}_{12}\right)>0 .$$
Note that the block ${\B\Phi}_{11}$ remains fixed; only the $p$th row/column of $\B\Theta$ changes.
${\B\phi}_{21}$ gets updated to $\hat{\B{\theta}}_{21}$, as does $\hat{\B{\theta}}_{12}$. From (\ref{partition-2}) the updated 
diagonal entry $\hat{\theta}_{22}$ satisfies:
$$\hat{\theta}_{22} -  \hat{\B{\theta}}_{21}({\B\Phi}_{11})^{-1}
\hat{\B{\theta}}_{12}  = \frac{1}{(s_{22} + \lambda)} > 0 .$$ 
Thus the updated matrix $\hat{\B{\Theta}}$ remains PD.
The result for the \DPGL\ algorithm follows, since both the versions \PGL\ and \DPGL\ solve the same 
block coordinate problem.
\end{proof}

\begin{rem}\label{rem:num:4}
A simple consequence of Lemmas~\ref{lem:warm-start} and \ref{lem:warm-start-pgl} is that the QPs arising in the process, namely 
the $\ell_1$ regularized QPs (\ref{lasso-grad-3}), (\ref{lasso1-primal})  and the box-constrained QPs (\ref{box-qp1-0}) and (\ref{dual-primal-cd}) are all valid convex programs, since all the respective matrices $\M{W}_{11}$,  $\B{\Theta}^{-1}_{11}$ and $\M{W}^{-1}_{11}$, 
$\B{\Theta}_{11}$ appearing in the quadratic forms are PD.
\end{rem}

As exhibited in Lemma~\ref{lem:warm-start-pgl}, both the algorithms \DPGL\ and \PGL\  are guaranteed to converge from any positive-definite warm start. This is due to the unconstrained formulation of the primal problem~(\ref{eqn-1}).

\GL\ really only requires an initialization for $\M W$, since it
constructs $\B\Theta$ on the fly.  Likewise \DPGL\ only requires an
initialization for $\B\Theta$. Having the other half of the tuple
assists in the block-updating algorithms. For example, \GL\ solves a
series of lasso problems, where $\B\Theta$ play the role as
parameters. By supplying $\B\Theta$ along with $\M W$, the block-wise
lasso problems can be given starting values close to the solutions.
The same applies to \DPGL. In neither case do the pairs have to be
inverses of each other to serve this purpose.

If we wish to start with inverse pairs, and maintain such a
relationship, we have described earlier how $O(p^2)$ updates after
each block optimization can achieve this. One caveat for \GL\ is that starting with an
inverse pair costs $O(p^3)$ operations, since we typically start with $\M W=\s+\lambda\M I$. 
For \DPGL, we typically start with a diagonal matrix, which is trivial to invert.

\section{Experimental Results \& Timing  Comparisons}\label{rev:sec:timings}
We compared the performances of algorithms \GL\ and \DPGL\
(both with and without warm-starts) on
different examples with varying $(n,p)$ values. While
most of the results are presented in this section, some are relegated
to the online Appendix~\ref{sec:times}. Section~\ref{sec:expt-simu}
describes some synthetic examples and
Section~\ref{sec:expt-real} presents comparisons on a
real-life micro-array data-set. 
\subsection{Synthetic Experiments}\label{sec:expt-simu}
In this section we present examples generated from two different covariance
models --- as characterized by the covariance matrix $\B\Sigma$ or
equivalently the precision matrix  $\B\Theta$. 
We create a data matrix $\M{X}_{n \times p}$ by drawing $n$
independent samples from a $p$ dimensional normal
distribution $\mathrm{MVN}(\M{0}, \B\Sigma)$. The sample
covariance matrix is taken as the input $\s$ to problem~(\ref{eqn-1}).
The two covariance models are described below:
\begin{description}
\item[Type-1]  The population
concentration matrix $\B\Theta= {\B\Sigma}^{-1}$ has
uniform sparsity with approximately $77$ \% of the entries zero. 

We created the covariance matrix as follows. We generated a 
matrix $\M{B}$ with iid standard Gaussian entries, symmetrized it
via $\frac{1}{2}(\M{B} + \M{B}')$ and set approximately $77$\% of the entries of this
matrix to zero, to obtain $\tilde{\M{B}}$ (say). 
We added a scalar multiple of the $p$ dimensional identity matrix to 
$\tilde{\M{B}}$ to get the precision matrix $\B{\Theta}=\tilde{\M{B}} + \eta
\M{I}_{p \times p}$, with $\eta$ chosen such that the minimum eigen value
of $\B\Theta$ is one.

\item[Type-2] This example, taken from \cite{yuan_lin_07},
is an auto-regressive process of order two ---
 the precision matrix being tri-diagonal:
\[ \theta_{ij} = \begin{cases} 0.5, & \mbox{if }\;\; |j-i|=1,\; i=2,\ldots,
  (p-1) ; \\ 
0.25, & \mbox{if } \;\;  |j-i|=2,\; i=3,\ldots,(p-2);\;\;  \\
1, & \mbox{if }\;\; i=j, \; i = 1, \ldots, p; \mbox{and} \;\;\\
0 &  \mbox{otherwise}. \end{cases}\]
\end{description}

For each of the two set-ups Type-1 and Type-2 we consider twelve
different combinations of $(n,p)$:
\begin{itemize}
\item[(a)]  $p=1000$,  $n \in \{1500,1000,500\}$.
\item[(b)]  $p=800$,   $n \in \{1000,800,500\}$.
\item[(c)]  $p=500$,   $n \in \{800,500,200\}$.
\item[(d)]  $p=200$,   $n \in \{500,200,50\}$.
\end{itemize}

For every $(n,p)$ we solved~(\ref{eqn-1}) on a
grid of twenty $\lambda$ values linearly spaced in the log-scale, with 
$\lambda_i = 0.8^{i}\times\{ 0.9\lambda_{\max}\}, \; i = 1,\ldots, 20 $, where 
$\lambda_{\max}=\max_{i\neq j} |s_{ij}|$, is the off-diagonal
entry of $\s$ with largest absolute value. $\lambda_{\max}$ is  the smallest value of $\lambda$ for which the
solution to (\ref{eqn-1}) is a diagonal matrix. 

Since this article focuses on the \GL\ algorithm, its properties and
alternatives that stem from the main idea of block-coordinate
optimization, we present here the performances of the following algorithms:
\begin{description}
\item[Dual-Cold] \GL\  with initialization $\M{W}=\s + \lambda
  \M{I}_{p \times p}$, as suggested in \cite{FHT2007a}.
\item[Dual-Warm] The path-wise version of \GL\ with warm-starts, as
  suggested in  \cite{FHT2007a}.  Although this path-wise version need not converge in general, this was not a problem  in our experiments, probably due to the fine-grid of $\lambda$ values.
\item[Primal-Cold] \DPGL\  with diagonal initialization $\B\Theta =(\mbox{diag}(\s)+\lambda\M{I})^{-1}.$
\item[Primal-Warm] The path-wise version of \DPGL\ with warm-starts.
\end{description}
We did not include \PGL\ in the comparisons above since 
\PGL\  requires  additional matrix rank-one updates after every row/column 
update, which makes it more expensive. None of the above listed algorithms require matrix inversions (via rank one updates). 
Furthermore, \DPGL\ and \PGL\  are quite similar as both are doing a block coordinate optimization on the dual. Hence 
we only included \DPGL\ in our comparisons. 
 We used our own implementation of the
\GL\ and \DPGL\ algorithm in R.  The entire program is written in R,
except the inner block-update solvers, which are the real work-horses:
\begin{itemize}
\item For \GL\ we used the lasso code {\texttt crossProdLasso} written in FORTRAN by \cite{FHT2007a};
\item For \DPGL\ we wrote our own FORTRAN code to solve the box QP.
\end{itemize}
An R package implementing \DPGL\ will be made available in CRAN.

In the figure and tables that follow below, for every algorithm, at a fixed $\lambda$ we report the \emph{total time} 
taken by \emph{all} the QPs --- the $\ell_1$ regularized QP for \GL\ and the box constrained QP for \DPGL\ till convergence
All computations were done on a Linux machine with model specs:
{\texttt Intel(R) Xeon(R) CPU 5160  @ 3.00GHz.}

\bigskip

\noindent\textbf{Convergence Criterion:} Since \DPGL\ operates on the the primal formulation and \GL\ operates
on the dual --- to make the convergence criteria comparable across
examples we based it on the relative change in the primal objective values
i.e. $f(\B\Theta)$ (\ref{eqn-1}) across two successive
iterations:
\begin{equation}\label{conv-crit1}
\frac{f(\B\Theta_{k}) - f(\B\Theta_{k-1})}{| f(\B\Theta_{k-1})| } \leq \mathrm{TOL},
\end{equation}
where one iteration refers to a full sweep across
$p$ rows/columns of the precision matrix (for \DPGL\ ) and covariance
matrix (for \GL\ ); and TOL denotes the tolerance level or level of accuracy of the solution. 
To compute the primal objective value for the \GL\ algorithm, the precision matrix is
computed from $\widehat{\M{W}}$ via direct inversion (the time taken for inversion and objective value computation is not included in the timing comparisons).
 
Computing the objective function is quite expensive relative to the computational cost of the iterations. In our experience 
convergence criteria based on a relative change in the precision matrix for \DPGL\  and the covariance matrix for \GL\ seemed to be a  practical choice for the examples we considered. However, for reasons we described above, we used criterion~\ref{conv-crit1} in the experiments.  

\bigskip

\noindent\textbf{Observations:}
Figure~\ref{fig-times1} presents the times taken by the
algorithms to converge to an accuracy of $\mathrm{TOL}=10^{-4}$ on a grid of $\lambda$ values. 
 
The figure shows eight different scenarios with $p > n$, corresponding to the two different 
covariance models Type-1 (left panel) and Type-2 (right panel). It is
quite evident 
that \DPGL\  with warm-starts (Primal-Warm) outperforms all the other
algorithms across all the different examples. All the algorithms
converge quickly for large values of $\lambda$ (typically high
sparsity) and become slower with decreasing $\lambda$. For large $p$
and small $\lambda$, convergence is slow; however for $p> n$, the non-sparse end of the 
regularization path is really not that interesting from a statistical viewpoint.
Warm-starts apparently do \emph{not} always help in speeding up the convergence of  \GL\ ;  
for example see Figure~\ref{fig-times1} with $(n,p)=(500,1000)$ (Type 1) and $(n,p)=(500,800)$ (Type 2). This probably further validates the 
fact that warm-starts in the case of \GL\ need to be carefully designed, in order for them to \emph{speed-up} convergence. 
Note however, that \GL\ with the warm-starts prescribed is not even guaranteed to converge --- we however did not come across any such instance among the experiments presented in this section.

Based on the suggestion of a referee we annotated the plots in Figure~\ref{fig-times1} with locations in the regularization path 
that are of interest. 
For each plot, two vertical dotted lines are drawn which
correspond to the $\lambda$s at which the distance of the estimated precision matrix $\widehat{\B{\Theta}}_\lambda$ 
from the 
population precision matrix is minimized wrt to the $\|\cdot\|_1$ norm (green) and $\|\cdot\|_F$ norm (blue). 
The optimal $\lambda$ corresponding to the $\|\cdot\|_1$ metric chooses sparser models than those chosen by 
$\|\cdot\|_F$; the performance gains achieved by \DPGL\ seem to be more prominent for the latter $\lambda$.

Table~\ref{tab:type1} presents the timings for all the four algorithmic variants on the twelve different $(n,p)$ combinations listed above for Type 1. For every example,
we report the total time till convergence on a grid of twenty $\lambda$ values 
for two different tolerance levels: $\mathrm{TOL}
\in \{10^{-4},10^{-5}\}$. 
Note that the \DPGL\ returns positive definite and sparse precision matrices even if the algorithm is terminated at a relatively small/moderate accuracy level --- this is not the case in \GL\ . 
 The rightmost column presents the 
proportion of non-zeros averaged across the entire path of solutions
$\widehat{\B\Theta}_{\lambda}$, where $\widehat{\B\Theta}_{\lambda}$
is obtained by solving~(\ref{eqn-1}) to a high precision i.e. $10^{-6}$, by
algorithms \GL\ and \DPGL\  and averaging the results.

Again we see that in all the examples \DPGL\ with warm-starts is the clear winner among its competitors. 
For a fixed $p$, the total time to trace out the path generally decreases with increasing $n$. There is no 
clear winner between \GL\ with warm-starts and \GL\ without
warm-starts. It is often seen that \DPGL\ without warm-starts
converges faster than both the variants of \GL\ (with and without
warm-starts).

Table~\ref{tab:type2} reports the timing comparisons for
Type~2. Once again we see that in all the examples Primal-Warm turns
out to be the clear winner.  

For $n \leq p=1000$, we observe that Primal-Warm is generally
faster for Type-2 than Type-1. This however, is reversed for
smaller values of $p \in \{800,500\}$.  Primal-Cold is has a smaller
overall computation time for Type-1 over Type-2.
In some cases (for example  $n \leq p=1000$), we see that
Primal-Warm in Type-2 converges much faster than its competitors on a
relative scale than in Type-1 --- this difference is due to the variations in the 
structure of the covariance matrix. 

\begin{figure}[htp!]
  \centering
\begin{psfrags}
\psfrag{pN1000500}{$n=500/p=1000$}
\psfrag{pN1000500type2}{$n=500/p=1000$}
\psfrag{pN800500}{$n=500/p=800$}
\psfrag{pN800500type2}{$n=500/p=800$}
\psfrag{pN500200}{$n=200/p=500$}
\psfrag{pN500200type2}{$n=200/p=500$}
\psfrag{pN20050}{$n=50/p=200$}
\psfrag{pN20050type2}{$n=50/p=200$}
\includegraphics[width=0.98\textwidth]{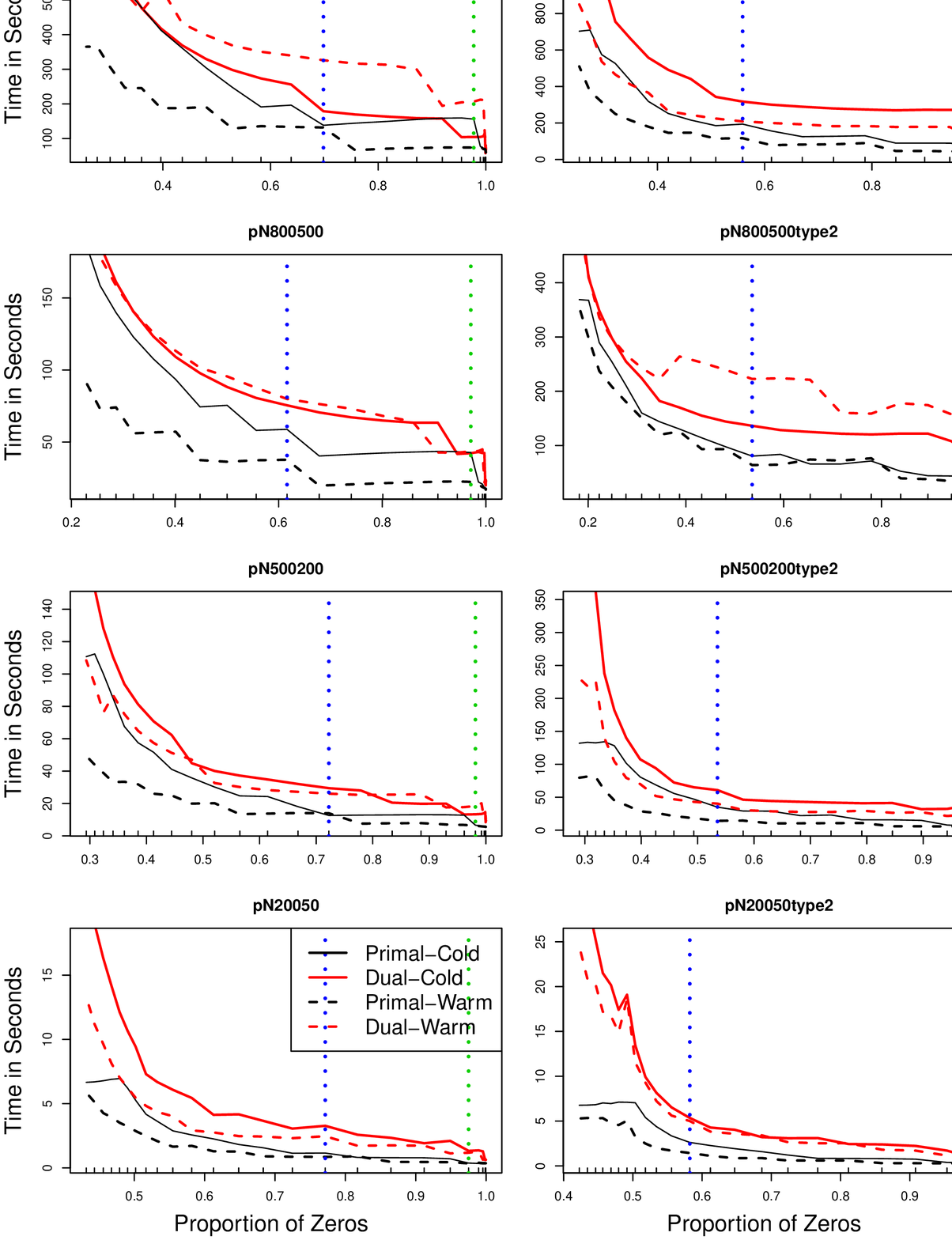}
\end{psfrags}
\caption{\small\em The timings in seconds for the four different
  algorithmic versions: \GL\ (with and without warm-starts) and \DPGL\
  (with and without warm-starts) for a grid of $\lambda$ values
  on the log-scale. [Left Panel] Covariance model for Type-1, [Right Panel] Covariance model for Type-2. 
The horizontal axis is indexed by the proportion
  of zeros in the solution. The vertical dashed lines correspond to
  the optimal $\lambda$ values for which the estimated errors
  $\|\widehat{\B\Theta}_{\lambda} - \B\Theta \|_1$ (green) and
  $\|\widehat{\B\Theta}_{\lambda} - \B\Theta \|_F$ (blue) are minimum.}
\label{fig-times1}
\end{figure}

\begin{table}[htp!!]\label{tab:type1}
\centering
\begin{tabular}{|l| c r r r r c|}
  \hline
 \multirow{2}{*}{ p /  n} & relative  & \multicolumn{4}{c}{Total time (secs) to compute a path of solutions} & 
 Average \%    \\
& error (TOL)    &  Dual-Cold & Dual-Warm &
                         Primal-Cold & Primal-Warm & Zeros in path \\ \hline
 \multirow{2}{*}{1000 / 500} &  $10^{-4}$ & 3550.71 & 6592.63 &
 2558.83 & \bf 2005.25 &  \multirow{2}{*}{80.2} \\ 
        &   $10^{-5}$  & 4706.22 & 8835.59 & 3234.97 &
                            \bf 2832.15 & \\ \hline
 \multirow{2}{*}{1000 / 1000} &    $10^{-4}$ & 2788.30 & 3158.71 &
 2206.95 & \bf 1347.05 &  \multirow{2}{*}{83.0} \\ 
              &   $10^{-5}$  & 3597.21 & 4232.92 & 2710.34 &\bf 1865.57 &
                \\ \hline           
\multirow{2}{*}{1000 / 1500} &    $10^{-4}$ & 2447.19 & 4505.02 &
1813.61 & \bf 932.34 &  \multirow{2}{*}{85.6} \\ 
                &  $10^{-5}$  & 2764.23 & 6426.49 & 2199.53 & \bf 1382.64 &  \\
            \hline\hline
\multirow{2}{*}{800 / 500} &   $10^{-4}$ & 1216.30 & 2284.56 & 928.37
& \bf 541.66 &   \multirow{2}{*}{78.8} \\ 
             &  $10^{-5}$ & 1776.72 & 3010.15 & 1173.76 &\bf 798.93 &  \\
                \hline
\multirow{2}{*}{800 / 800} &     $10^{-4}$  & 1135.73 & 1049.16 &
788.12 & \bf 438.46 &  \multirow{2}{*}{80.0} \\ 
             & $10^{-5}$  & 1481.36 & 1397.25 & 986.19 & \bf  614.98 &  \\  \hline 
\multirow{2}{*}{800 / 1000} &    $10^{-4}$ & 1129.01 & 1146.63 &
786.02 & \bf 453.06 &  \multirow{2}{*}{80.2} \\ 
              &  $10^{-5}$ & 1430.77 & 1618.41 & 992.13 &\bf 642.90 &
\\  \hline \hline
\multirow{2}{*}{500 / 200} &    $10^{-4}$ & 605.45 & 559.14 & 395.11 &
\bf 191.88 &  \multirow{2}{*}{75.9} \\ 
              & $10^{-5}$  & 811.58 & 795.43 & 520.98 &\bf 282.65 &  \\  \hline
\multirow{2}{*}{500 / 500} &    $10^{-4}$  & 427.85 & 241.90 & 252.83 &\bf 123.35&  \multirow{2}{*}{75.2} \\ 
              & $10^{-5}$ & 551.11 & 315.86 & 319.89 &\bf 182.81 & \\  \hline
\multirow{2}{*}{500 / 800} &    $10^{-4}$ & 359.78 & 279.67 & 207.28 &\bf 111.92&  \multirow{2}{*}{80.9} \\ 
              &  $10^{-5}$ & 416.87 & 402.61 & 257.06 &\bf 157.13 &  \\  \hline \hline
\multirow{2}{*}{200 /  50} &    $10^{-4}$ & 65.87 & 50.99 & 37.40 &\bf 23.32 &  \multirow{2}{*}{75.6} \\ 
             & $10^{-5}$ & 92.04 & 75.06 & 45.88 &\bf 35.81 &   \\  \hline
\multirow{2}{*}{200 / 200} &    $10^{-4}$ & 35.29 & 25.70 & 17.32 &\bf 11.72 & \multirow{2}{*}{66.8} \\ 
             &  $10^{-5}$  & 45.90 & 33.23 & 22.41 &\bf 17.16 &  \\  \hline
\multirow{2}{*}{200 / 300} &   $10^{-4}$ & 32.29 & 23.60 & 16.30 &\bf 10.77 &  \multirow{2}{*}{66.0} \\ 
             &  $10^{-5}$ & 38.37 & 33.95 & 20.12 & \bf 15.12 &  \\  \hline
\end{tabular}
\caption{\small\em Table showing the performances of the four algorithms
  \GL\ (Dual-Warm/Cold) and \DPGL\ (Primal-Warm/Cold) for the
  covariance model Type-1. We present the times (in seconds) 
required to  compute a path of
  solutions to~(\ref{eqn-1}) (on a grid of twenty $\lambda$ values) for
  different $(n,p)$ combinations and relative errors
  (as in~(\ref{conv-crit1})). The rightmost column gives the averaged
  sparsity level across the grid of $\lambda$ values. \DPGL\ with
  warm-starts is consistently the winner across all the examples.}\label{tab:type1}
\end{table}

\begin{table}[htp!!]
\centering
\begin{tabular}{|l| c  r r r r c|}
  \hline
\multirow{2}{*}{ p /  n} & relative  & \multicolumn{4}{c}{Total time (secs) to compute a path of solutions} &  Average \%    \\
                         & error (TOL)    &  Dual-Cold & Dual-Warm &
                         Primal-Cold & Primal-Warm & Zeros in path \\ \hline

 \multirow{2}{*}{1000 / 500} & $10^{-4}$ & 6093.11 & 5483.03 & 3495.67
 & \bf 1661.93 & \multirow{2}{*}{75.6} \\ 
                             &   $10^{-5}$ & 7707.24 & 7923.80 &
                             4401.28 &\bf  2358.08 &  \\ \hline

 \multirow{2}{*}{1000 / 1000} &  $10^{-4}$ & 4773.98 & 3582.28 & 2697.38 &\bf 1015.84 & \multirow{2}{*}{76.70} \\ 
                              &   $10^{-5}$ & 6054.21 & 4714.80 &
                              3444.79 & \bf 1593.54 &  \\ \hline

 \multirow{2}{*}{1000 / 1500} &   $10^{-4}$ & 4786.28 & 5175.16 & 2693.39 &\bf 1062.06 & \multirow{2}{*}{78.5} \\ 
                              &    $10^{-5}$ & 6171.01 & 6958.29 & 3432.33 &\bf 1679.16 &  \\  \hline \hline
 
 \multirow{2}{*}{800 / 500} &   $10^{-4}$ & 2914.63 & 3466.49 & 1685.41 &\bf 1293.18 & \multirow{2}{*}{74.3} \\ 
                            &   $10^{-5}$ & 3674.73 & 4572.97 & 2083.20 &\bf 1893.22 &  \\  \hline
 
 \multirow{2}{*}{800 / 800} &   $10^{-4}$ & 2021.55 & 1995.90 & 1131.35 &\bf 618.06 & \multirow{2}{*}{74.4} \\ 
  &   $10^{-5}$ & 2521.06 & 2639.62 & 1415.95 &\bf 922.93 & \\  \hline

\multirow{2}{*}{800 / 1000}  &   $10^{-4}$ & 3674.36 & 2551.06 & 1834.86 &\bf 885.79 & \multirow{2}{*}{75.9} \\ 
                             &   $10^{-5}$ & 4599.59 & 3353.78 &
                             2260.58 &\bf 1353.28 &  \\  \hline \hline

\multirow{2}{*}{500 / 200} &   $10^{-4}$ & 1200.24 & 885.76 & 718.75
&\bf  291.61 & \multirow{2}{*}{70.5} \\ 
                           &   $10^{-5}$ & 1574.62 & 1219.12 & 876.45
                           &\bf 408.41 & \\   \hline
  
\multirow{2}{*}{500 / 500} &   $10^{-4}$ & 575.53 & 386.20 & 323.30 &\bf 130.59 & \multirow{2}{*}{72.2} \\ 
    &   $10^{-5}$ & 730.54 & 535.58 & 421.91 &\bf 193.08 &  \\  \hline

\multirow{2}{*}{500 / 800}  &  $10^{-4}$ & 666.75 & 474.12 & 373.60 &\bf 115.75 & \multirow{2}{*}{73.7} \\ 
                            &  $10^{-5}$ & 852.54 & 659.58 & 485.47 &\bf 185.60 &  \\  \hline \hline

\multirow{2}{*}{200 / 50}  &   $10^{-4}$ & 110.18 & 98.23 & 48.98 &\bf
26.97 & \multirow{2}{*}{73.0} \\ 
                           &   $10^{-5}$ & 142.77 & 133.67 & 55.27
                           &\bf  33.95 &  \\  \hline

\multirow{2}{*}{200 / 200} &   $10^{-4}$ & 50.63 & 40.68 & 23.94 &\bf  9.97 & \multirow{2}{*}{63.7} \\ 
                           &   $10^{-5}$ & 66.63 & 56.71 & 31.57 &\bf 14.70 & \\  \hline

 \multirow{2}{*}{200 / 300} &   $10^{-4}$ & 47.63 & 36.18 & 21.24 &\bf
 8.19 & \multirow{2}{*}{65.0} \\ 
                            &   $10^{-5}$ & 60.98 & 50.52 & 27.41 &\bf 12.22 & \\ 
   \hline
\end{tabular}
\caption{\small\em Table showing comparative timings of the four algorithmic
  variants of \GL\ and \DPGL\ for the covariance model in Type-2. This
table is similar to Table~\ref{tab:type1}, displaying results
for Type-1. \DPGL\ with
  warm-starts consistently outperforms all its competitors.}\label{tab:type2}
\end{table}

\subsection{Micro-array Example}\label{sec:expt-real}  
We consider the data-set introduced in \cite{AB1999} and further
studied in \cite{rothman_2010,MH-GL-11-jmlr}. In this  experiment,
tissue samples were analyzed using an Affymetrix  Oligonucleotide
array. The data was processed, filtered and reduced to a subset of
$2000$ gene expression values. The number of Colon Adenocarcinoma
tissue samples is $n=62$. For the purpose of the experiments presented
in this section, 
we pre-screened the genes to a size of $p=725$. We obtained this subset of genes 
using
the idea of \emph{exact covariance thresholding} introduced in our
paper~\citep{MH-GL-11-jmlr}. We thresholded the sample correlation matrix obtained
from the $62  \times 2000 $ microarray data-matrix 
into connected components with a threshold of $0.00364$\footnote{this is the largest value of
the threshold for which the size of the largest connected component
is smaller than 800} --- the
genes belonging to the largest connected component formed our
pre-screened gene pool of size $p=725$. This (subset) data-matrix of size
$(n,p)=(62,725)$ is used for our experiments. 

The results presented below in Table~\ref{tab:type-micro} show timing
comparisons of the four different algorithms: Primal-Warm/Cold and
Dual-Warm/Cold on a grid of fifteen $\lambda$ values in the
log-scale. Once again we see that the Primal-Warm outperforms the
others in terms of speed and accuracy. Dual-Warm  performs quite
well in this example.

\begin{table}[htp!!]
\centering
\begin{tabular}{|l c c c c|} \hline
 relative  & \multicolumn{4}{c|}{Total time (secs) to compute a path of solutions}   \\
 error (TOL)    &  Dual-Cold & Dual-Warm &  Primal-Cold & Primal-Warm  \\ \hline
$10^{-3}$    & 515.15 & 406.57 & 462.58 &\bf  334.56   \\
 $10^{-4}$  &  976.16 &  677.76 & 709.83 &\bf  521.44 \\ \hline
\end{tabular}
\caption{\small\em Comparisons among algorithms for a microarray dataset with
   $n=62$ and $p=725$, for different tolerance levels (TOL). We took
  a grid of fifteen $\lambda$ values, the average \% of zeros along the
  whole path is $90.8$.}\label{tab:type-micro}
\end{table}

\section{Conclusions}
\label{sec:conclusions}
This paper explores some of the apparent mysteries in the behavior of the \GL\ algorithm introduced in~\cite{FHT2007a}.
These have been explained by leveraging the fact that the \GL\ algorithm is solving the dual of the
graphical lasso problem (\ref{eqn-1}), by block coordinate
ascent. Each block update, itself the solution to a convex program,
is solved via its own dual, which is equivalent to a lasso problem.
The optimization variable is $\M W$, the
covariance matrix, rather than the target precision matrix $\B\Theta$.  During the
course of the iterations, a working version of $\B\Theta$ is
maintained, but it may not be positive definite, and its inverse is
not $\M W$. Tight convergence is therefore essential, for the solution
$\hat{\B\Theta}$ to be a proper inverse covariance.
There are issues using warm starts with \GL, when computing a path of solutions. Unless the sequence of $\lambda$s are sufficiently close, since the  ``warm start''s are not dual feasible, the algorithm can get into trouble.

We have also developed two primal algorithms \PGL\ and \DPGL. The
former is more expensive, since it maintains the relationship
$\M{W}={\B\Theta}^{-1}$ at every step, an $O(p^3)$ operation per sweep
across all row/columns. 
\DPGL\ is similar in flavor to \GL\, except its
optimization variable is $\B\Theta$.  It also solves the dual problem
when computing its block update, in this case a box-QP.  This box-QP
has attractive sparsity properties at {\em both} ends of the
regularization path, as evidenced in some of our experiments.
It maintains a positive definite $\B\Theta$
throughout its iterations, and can be started at any positive definite
matrix. Our experiments show in addition that \DPGL\ is faster than \GL.

An R package implementing \DPGL\ will be made available in CRAN.
\section{Acknowledgements}
We would like to thank Robert Tibshirani and his research
group at Stanford Statistics for helpful discussions. We are also
thankful to the anonymous referees whose comments led to improvements in this
presentation. 
\bibliographystyle{plainnat}
\bibliography{new_agst_new.bib}

\newpage

\appendix
\section{Online Appendix}\label{sec:appendix}
This section complements the examples provided in the paper with
further experiments and illustrations. 

\subsection{Examples: Non-Convergence of \GL\ with warm-starts}\label{appendix:example1}
This section illustrates with examples that warm-starts for the \GL\
need not converge. This is a continuation of examples presented in Section~\ref{sec:warm-starts}.\\

\noindent\textbf{Example 1:}\\
We took $(n,p) = (2, 5)$ and setting the seed of the random number
generator in R as {\texttt set.seed(2008) } we generated a
data-matrix $\M{X}_{n \times p}$ with iid standard Gaussian entries. The sample
covariance matrix $\s$ is given below:
\[\begin{pmatrix}  0.03597652 & 0.03792221  & 0.1058585 & -0.08360659  &   0.1366725  \\ 
 0.03597652  & 0.03792221  &  0.1058585 &  -0.08360659 & 0.1366725 \\
0.10585853 & 0.11158361 & 0.3114818& -0.24600689 & 0.4021497 \\
-0.08360659& -0.08812823& -0.2460069&  0.19429514& -0.3176160 \\
0.13667246 & 0.14406402 & 0.4021497& -0.31761603 & 0.5192098 
 \end{pmatrix}\]

With $q$ denoting the maximum off-diagonal entry of $\s$ (in absolute
value), we solved (\ref{eqn-1}) using \GL\ at $\lambda=0.9 \times
q$. The covariance matrix for this $\lambda$ was taken as a warm-start
for the \GL\ algorithm with $\lambda'=\lambda \times 0.01$.  
The smallest eigen-value of the working covariance matrix $\M W$
produced by the \GL\ algorithm, upon updating the first row/column was: 
$\;-0.002896128$, which is clearly undesirable for the convergence of the
algorithm \GL\ . This is why the algorithm \GL\ breaks down.  \\
 
\noindent\textbf{Example 2:}\\
The example is similar to above, with $(n,p)=(10,50)$,  the seed of random number
generator in R being set to {\texttt set.seed(2008) } and $\M{X}_{n\times
  p}$ is the data-matrix with iid Gaussian entries. If the covariance matrix $\widehat{\M{W}_\lambda}$ which solves
problem~(\ref{eqn-1}) with $\lambda = 0.9 \times \max_{i\neq j}
  |s_{ij}|$ is taken as a warm-start to the \GL\ algorithm
with $\lambda'=\lambda \times 0.1$ --- the algorithm fails to
converge. Like the previous example, after the first row/column update, 
the working covariance matrix has negative eigen-values.

\section{Further Experiments and Numerical Studies}\label{sec:times}
This section is a continuation to Section~\ref{rev:sec:timings}, in
that it provides further examples 
comparing the performance of algorithms \GL\ and \DPGL\ .  
The experimental data is generated as follows. For a
fixed value of $p$, we generate a matrix $\M{A}_{p\times p}$ with
random Gaussian entries. The matrix is symmetrized by $\M{A}
\leftarrow (\M{A} + \M {A}')/2 $.  Approximately half of the
off-diagonal entries of the matrix are set to zero, uniformly at
random.  All the eigen-values of the matrix $\M{A}$ are lifted so that
the smallest eigen-value is zero. The noiseless version of the
precision matrix is given by $\B{\Theta}=\M{A} + \tau \M{I}_{p\times
  p}$. We generated the sample covariance matrix $\s$ 
by adding symmetric positive semi-definite random noise $\M N$ to
${\B\Theta}^{-1}$; i.e. $\s={\B \Theta}^{-1}+\M{N}$, where this noise
is generated in the same manner as $\M A$.  We considered four
different values of $p \in \{300,500,800,1000\}$ and two different
values of $\tau \in \{1,4\}$.  

For every $p,\;\tau$ combination we considered a path of twenty $\lambda$ values on the geometric scale. 
For every such case four experiments were performed: Primal-Cold,
Primal-Warm, Dual-Cold and Dual-Warm (as described in Section~\ref{rev:sec:timings}). 
Each combination was run 5 times, and the results averaged, to avoid
dependencies on machine loads.  Figure~\ref{fig-times1} shows the
results. Overall, \DPGL\ with warm starts performs the best,
especially at the extremes of the path. We gave some explanation for this in Section~\ref{sec:solve-cd}.
For the largest problems ($p=1000$) their performances are comparable in the central part of the path (though \DPGL\ dominates), but at the extremes \DPGL\ dominates by a large margin.
\begin{figure}[htpb]
  \centering
\begin{psfrags}
\psfrag{tau = 1 / p=300}{$\tau=1/p=300$}
\includegraphics[width=\textwidth]{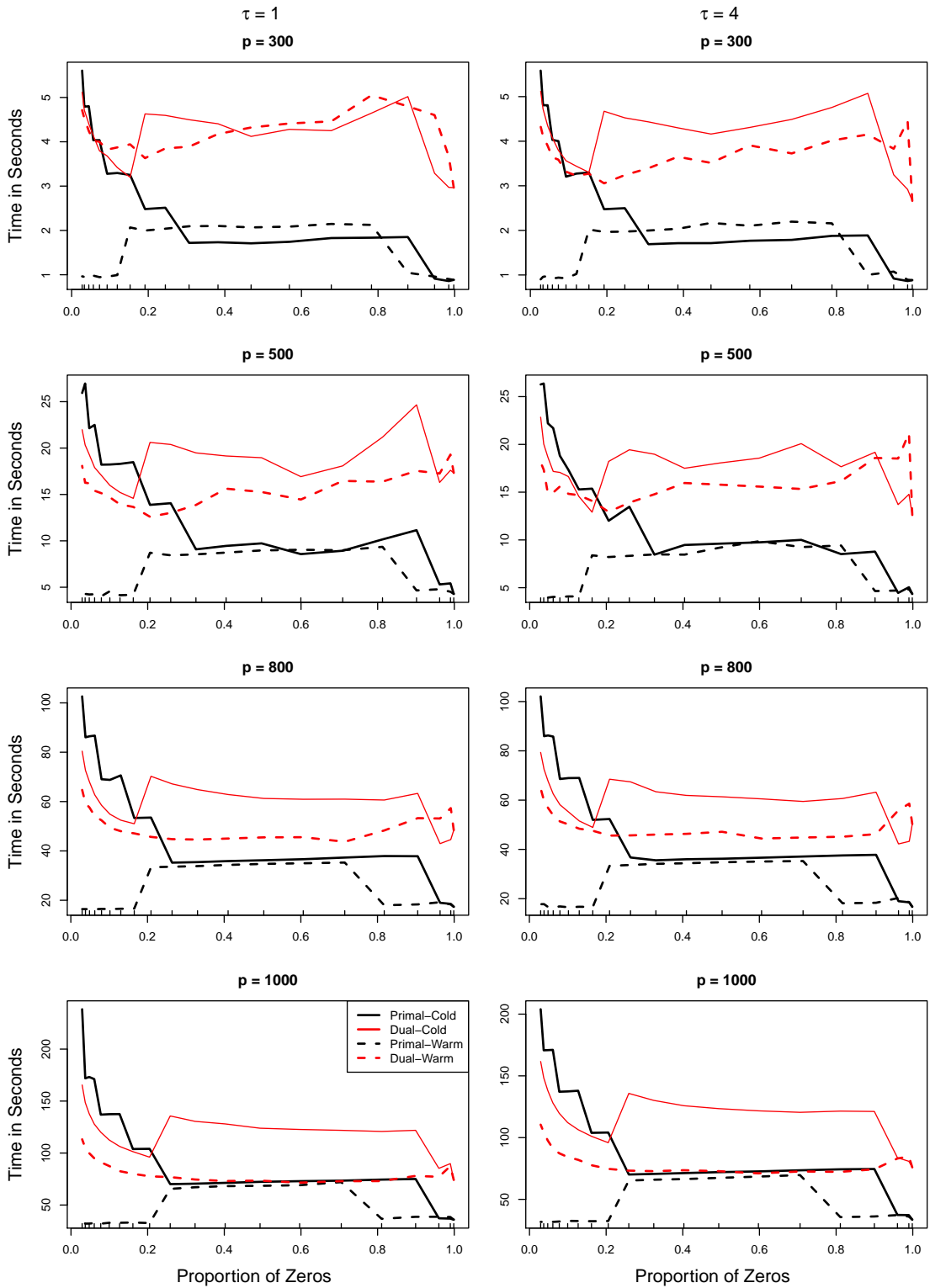}
\end{psfrags}
\caption{The timings in seconds for the four different
  algorithmic versions \GL\ (with and without warm-starts) and \DPGL\
  (with and without warm-starts) for a grid of twenty $\lambda$ values
  on the log-scale. The horizontal axis is indexed by the proportion of zeros in the solution.}
\label{fig-times1}
\end{figure}
\end{document}